\theoremstyle{plain}
\newtheorem{theorem}{Theorem}[section]
\newtheorem{lemma}[theorem]{Lemma}
\theoremstyle{definition}
\newtheorem{definition}[theorem]{Definition}
\newtheorem{assumption}[theorem]{Assumption}
\theoremstyle{remark}
\newcommand{\la}{\langle}
\newcommand{\ra}{\rangle}
\newcommand{\norm}[1]{\left\lVert#1\right\rVert}
\newcommand{\R}{\mathbb{R}}
\newcommand{\E}{\mathbb{E}}
\newcommand{\D}{\mathcal{D}}
\newcommand{\tg}{\tilde{g}}
\newcommand{\homega}{\hat{\omega}}
\DeclarePairedDelimiter{\ceil}{\lceil}{\rceil}
\icmltitlerunning{A Multilevel Monte Carlo Approach for Mitigating Compression Bias in Distributed Learning}
\begin{document}

\twocolumn[
\icmltitle{Beyond Communication Overhead: A Multilevel Monte Carlo Approach for Mitigating Compression Bias in Distributed Learning}

\icmlsetsymbol{equal}{*}

\begin{icmlauthorlist}
    \icmlauthor{Ze'ev Zukerman}{equal,yyy}
    \icmlauthor{Bassel Hamoud}{equal,yyy}
    \icmlauthor{Kfir Y. Levy}{yyy}
\end{icmlauthorlist}

\icmlaffiliation{yyy}{Viterby Faculty of Electrical and Computer Engineering, Technion, Israel}
\icmlcorrespondingauthor{Ze'ev Zukerman }{ze.zukerman@campus.technion.ac.il}
\icmlcorrespondingauthor{Bassel Hamoud}{bassel164@campus.technion.ac.il}
\icmlcorrespondingauthor{Kfir Y. Levy}{kfirylevy@technion.ac.il}

\icmlkeywords{Distributed Machine Learning, Multilevel Monte Carlo, Compressed Gradients, Adaptive Methods}

\vskip 0.3in
]

\printAffiliationsAndNotice{\icmlEqualContribution}

\begin{abstract}
    Distributed learning methods have gained substantial momentum in recent years, with communication overhead often emerging as a critical bottleneck. Gradient compression techniques alleviate communication costs but involve an inherent trade-off between the empirical efficiency of biased compressors and the theoretical guarantees of unbiased compressors. In this work, we introduce a novel Multilevel Monte Carlo (MLMC) compression scheme that leverages biased compressors to construct statistically unbiased estimates. This approach effectively bridges the gap between biased and unbiased methods, combining the strengths of both. To showcase the versatility of our method, we apply it to popular compressors, like Top-$k$ and bit-wise compressors, resulting in enhanced variants. Furthermore, we derive an adaptive version of our approach to further improve its performance. We validate our method empirically on distributed deep learning tasks.
\end{abstract}
\section{Introduction}
Distributed learning has emerged as a critical paradigm for scaling machine learning to massive datasets across multiple computing nodes. In this setting, a central server coordinates multiple worker nodes, each computing local gradients on their respective data shards and communicating updates back to the server. This parallelization accelerates training, but introduces a fundamental bottleneck: communication overhead \cite{konečný2018federated, wang2021fieldguidefederatedoptimization}. To mitigate this, gradient compression techniques are commonly employed to reduce the volume of transmitted data \cite{QSGD_Alistarh_2017,deep_gradient_2018}. However, these methods introduce a trade-off between unbiased and biased compressors \cite{biased_compression_Beznosikov_2020}.

Unbiased compression techniques, such as random sparsification (e.g., Rand-$k$) and statistical quantization methods (e.g., QSGD \cite{QSGD_Alistarh_2017}), ensure that the expected value of the compressed gradient remains equal to the original gradient. They are well understood theoretically because they align with the standard theoretical guarantees of data-parallel SGD \cite{parallel_SGD_Jain_2017,parallel_SGD_Dekel_2017}. However, their empirical performance is often suboptimal since they select elements at random rather than prioritizing the most informative components of the gradient. This leads to inefficient gradient updates, which negatively affect performance.

In contrast, biased compressors, such as Top-k sparsification, retain the most informative components of the gradient while discarding less significant elements, leading to superior empirical performance \cite{ef_seide_2014,ef21_richtarik_2021}. However, they introduce a degradation in theoretical guarantees, as their biased nature prevents them from directly aligning with the classical analysis of data-parallel SGD. This necessitates additional correction mechanisms, such as error feedback (e.g., EF21 \cite{ef21_richtarik_2021}), to ensure convergence.

Beyond gradient compression, distributed learning encompasses a wide range of techniques aimed at improving scalability and efficiency. Methods such as asynchronous training \cite{hogwild_recht_2011,Dean2012LargeSD, tyurin2024ShadowheartSGD}, in which worker nodes update the central model without waiting for all nodes to synchronize, help mitigate communication delays. Federated learning \cite{Konecn2016FederatedOD,Federated_Kairouz_2021}, which enables training while preserving data privacy, has also gained significant traction. Furthermore, decentralized training \cite{decentralized_koloskova_2019} removes the need for a server to maintain the model and instead propagates knowledge through "gossip" mechanisms. Additionally, techniques like local updates \cite{stich2018local,slowcal_sgd_levy_2024,mishchenko2022proxskip,condat2024tamuna}, where workers perform multiple gradient steps before communicating with the server, reduce communication frequency and enhance efficiency. Each of these methods aims to strike a balance between computation, communication, and convergence guarantees.

To bridge the gap between unbiased and biased compression techniques, we introduce a novel compression scheme based on Multilevel Monte Carlo (MLMC) methods \cite{MLMC_giles_2013}. MLMC techniques construct an estimator by combining multiple levels of approximation, each with a different quality (variance) and cost. The heart of the MLMC method is that it transduces bias into variance. We leverage this core property to construct unbiased estimates from biased compressed gradients, transducing their bias into controlled variance, and thereby ensuring both empirical efficiency and good parallelization ability.

We apply our MLMC-based framework to popular compressors, demonstrating how it enhances their performance by reducing compression bias while keeping the communication costs minimal. Furthermore, we introduce an adaptive version of our approach, dynamically optimizing compression levels to further improve efficiency. We validate our method through various deep learning experiments, showcasing its convergence speed and communication efficiency.

By leveraging MLMC to mitigate compression bias, our work provides a principled solution that reconciles the strengths of biased and unbiased compression techniques. This contribution paves the way for more efficient distributed learning frameworks that maintain both strong theoretical guarantees and superior empirical performance.

\subsection{Related Work}\label{sec:related_work}
Gradient compression techniques are essential for reducing communication costs in distributed optimization. These methods fall into \emph{unbiased} and \emph{biased} approaches, each offering different trade-offs in terms of convergence guarantees and empirical performance. Some works also explore \emph{bidirectional compression}, where both worker-to-server and server-to-worker communication is compressed \cite{NaturalCompression_Horvath_2022,bidirectional_gorbunov_2020}. While bidirectional compression is relevant in some distributed learning settings, our focus remains on \emph{gradient compression}, where the primary challenge is reducing worker-to-server communication while ensuring convergence.

\paragraph{Unbiased Compression Methods.} Unbiased compression methods ensure that the expectation of the compressed gradient equals the true gradient. QSGD \cite{QSGD_Alistarh_2017} and natural compression \cite{NaturalCompression_Horvath_2022} are prominent examples, providing strong theoretical guarantees but suffering from slow empirical convergence due to the random selection of gradient components. DIANA \cite{mishchenko2023DIANA, horváth2019stochasticdistributedlearninggradient} overcomes this by compressing gradient differences.
MARINA \cite{MARINA_Gorbunov_2021} incorporates variance reduction to mitigate this issue by using unbiased compressions of gradient differences.
DASHA \cite{DASHA_Tyurin_2023} improves efficiency using structured and compressed updates only.
EF-BV \cite{EF_BV_Condat_2022} offers a unifying framework for biased and unbiased compressors, which recovers both DIANA and EF21 \citep{ef21_richtarik_2021} as special cases, but does not aim to generate unbiased estimators from biased ones, in contrast to our work.
\citet{horvath2021UnbiasedFromBiased} developed a related approach, which constructs an unbiased compressor from two biased ones using an error feedback mechanism, achieving better convergence at the cost of roughly doubling the communication cost.

\paragraph{Biased Compression Methods and Error Feedback.} Biased compressors, such as Top-K sparsification \cite{sparisified_sgd_stich_2018} and SignSGD \cite{sign_sgd_bernstein_2018, sign_sgd_karimireddy_2019}, retain the most informative gradient components, leading to superior empirical performance. However, these methods introduce biases that require correction to ensure convergence. Error feedback (EF) \cite{ef_seide_2014} was introduced as a correction mechanism, which was later refined by EF21 \cite{ef21_richtarik_2021} to eliminate restrictive assumptions and improve theoretical guarantees. EF21-SGDM \cite{EF21_SGDM_fatkhullin_2023} further stabilizes updates using momentum, reducing sample complexity and improving convergence speed. Adaptive gradient sparsification \cite{adaptive_GD_Han_2020} dynamically adjusts sparsity levels, balancing communication efficiency and performance. 
The introduction of bias typically adds additional terms to the convergence bounds \cite{EF21_SGDM_fatkhullin_2023} which can hinder performance in massive parallelization settings.

In summary, unbiased methods are well understood, easy to analyze, and enjoy simple bounds, but are often impractical due to inefficiency, while biased methods, coupled with EF techniques, offer superior empirical results. Our work builds on these insights by further refining biased compression strategies 
and offering a plug-and-play mechanism to construct unbiased estimates from biased ones. We further show that our technique works seamlessly for any compressor and enhances convergence efficiency, bridging the gap between biased and unbiased compression methods.
\section{Background}
\subsection{Problem Statement}
\vspace{0.1cm}
We consider the distributed machine learning setting with a master server and $M$ machines $i=1,...,M$. We assume a heterogeneous setting in which each machine $i\in[M]$ has access to $i.i.d$ samples from some data distribution $\D_i$.
We aim to minimize the following problem:
\begin{equation}
    \arg\min_{x\in\R^d} f(x) = \arg\min_{x\in\R^d} \frac{1}{M}\sum_{i=1}^M f_i(x)
\end{equation}
where $f_i:\R^d \rightarrow \R$ measures the expected loss of the model on the local data of machine $i$. Namely, $f_i(x) = \E_{z_i\sim \D_i}[f_i(x,z_i)]$, where $f_i(x,z_i)$ is the loss of model $x$ w.r.t sample $z_i\sim\D_i$.
In each step $t\in[T]$, the master server broadcasts the current model $x_t\in\R^d$ to the $M$ machines, and each machine $i\in[M]$ computes a stochastic gradient $v_{t,i}\triangleq\nabla f_i(x_t,z_{t,i})$, where $z_{t,i}\sim\D_i$, computes a compression (an estimate) $g_{t,i}$ of $v_{t,i}$, and sends $g_{t,i}$ back to the server. The server aggregates $\{g_{t,i}\}_{i=1}^M$ and uses the result to update the model.
Note that when $g_{t,i} = v_{t,i}$, we have the known Data-parallel SGD scheme, which is formalized in Alg.~\ref{alg:parallel_SGD} and Theorem \ref{theorem:parallel_sgd_performance} \cite{parallel_SGD_Dekel_2017,parallel_SGD_nonconvex_ghadimi_2013}. Note that a central property of Theorem \ref{theorem:parallel_sgd_performance} is that the gradients are \emph{conditionally unbiased}, i.e., $\E[v_{t,i}|x_t] = \nabla f_i(x_t),\forall t,i$. This assumption is not always satisfied when compression is introduced, as we elaborate in the following subsections.

We make the following assumptions throughout the paper:
\begin{assumption}\label{assum:smoothness_of_loss}
    The loss functions $f_i$ are $L$-smooth: $f_i(y)\leq f_i(x) + \la\nabla f_i(x), y-x\ra + \frac{L}{2}\norm{y-x}^2, \forall x,y\in\R^d, \forall i\in[M]$.
\end{assumption}
\begin{assumption}\label{assum:bounded_variance}
    The uncompressed stochastic gradients $\nabla f_i(x,z)$
    have bounded variance: $\forall i\in[M], \forall x\in\R^d$,  $\E[\norm{\nabla f_i(x,z) - \nabla f_i(x)}^2|x] \leq \sigma^2$.
\end{assumption}

\begin{algorithm}
   \caption{Data-parallel SGD}\label{alg:parallel_SGD}
\begin{algorithmic}
    \STATE {\bfseries Input:} initialization $x_1$, step-size $\eta$.
    \FOR{$t=1$ {\bfseries to} $T$}
        \STATE The server broadcasts $x_t$ to machines $i=1,..,M$
        \FOR{$i=1$ {\bfseries to} $M$ {\bfseries in parallel}} 
            \STATE Sample $z_{t,i}$ from local dataset $\D_i$
            \STATE Compute $v_{t,i} = \nabla f_i(x_t,z_{t,i})$
            \STATE Send $v_{t,i}$ to server
        \ENDFOR
    \STATE Server aggregates: $v_t = \frac{1}{M}\sum_{i=1}^M v_{t,i}$
    \STATE Server updates: $x_{t+1} = x_t - \eta v_t$
    \ENDFOR
\end{algorithmic}
\end{algorithm}

\begin{theorem}\label{theorem:parallel_sgd_performance}
    Under Assumption \ref{assum:smoothness_of_loss}, Alg.~\ref{alg:parallel_SGD} guarantees the following error in the convex case, for any $\eta \leq 1/2L$:
    \vspace{-0.2cm}
    \begin{equation*}
        \E\left[f(\bar{x}_T) - f(x^*)\right] \leq \frac{D^2}{2T\eta} + \frac{\eta}{T}\sum_{t=1}^T \E V_t^2,
        \vspace{-0.2cm}
    \end{equation*}
    and the following error in the nonconvex case, for $\eta \leq 1/L$:
    \vspace{-0.1cm}
    \begin{equation*}
        \frac{1}{T}\sum_{t=1}^T \E\norm{\nabla f(x_t)}^2 \leq \frac{2\Delta_1}{T\eta} + \frac{\eta L}{T}\sum_{t=1}^T \E V_t^2,
        \vspace{-0.2cm}
    \end{equation*}
    where $\bar{x}_T = \frac{1}{T}\sum_{t=1}^T x_t$, $x^* = \arg\min_{x}f(x)$, $D=\norm{x_1 - x^*}$, $V_t^2 = \frac{1}{M^2}\sum_{i=1}^M\E[\norm{v_{t,i}-\nabla f_i(x_t)}^2|x_t]$, and $\Delta_1 = f(x_1) - f(x^*)$.
\end{theorem}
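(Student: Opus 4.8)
The plan is to run the textbook one-step analysis of SGD, treating the aggregated estimator $v_t = \frac1M\sum_{i=1}^M v_{t,i}$ as a single stochastic gradient for the objective $f$. The one fact we need up front is the decomposition of its second moment. Conditioned on $x_t$, each $v_{t,i}$ is unbiased for $\nabla f_i(x_t)$, and since the samples $z_{t,i}$ are drawn independently across machines the cross terms vanish, so
\begin{equation*}
\E[\norm{v_t-\nabla f(x_t)}^2\mid x_t]=\frac1{M^2}\sum_{i=1}^M\E[\norm{v_{t,i}-\nabla f_i(x_t)}^2\mid x_t]=V_t^2,
\end{equation*}
hence $\E[v_t\mid x_t]=\nabla f(x_t)$ and $\E[\norm{v_t}^2\mid x_t]=\norm{\nabla f(x_t)}^2+V_t^2$. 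This identity is exactly what lets the final bounds be written purely in terms of $V_t^2$. (Only Assumption~\ref{assum:smoothness_of_loss} is actually used below; Assumption~\ref{assum:bounded_variance} would only be needed to further bound $V_t^2\le\sigma^2/M$.)

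For the nonconvex bound I would apply the $L$-smoothness descent lemma to the update $x_{t+1}=x_t-\eta v_t$, giving $f(x_{t+1})\le f(x_t)-\eta\la\nabla f(x_t),v_t\ra+\tfrac{L\eta^2}{2}\norm{v_t}^2$. Taking $\E[\cdot\mid x_t]$ and substituting the two identities above yields $\E[f(x_{t+1})\mid x_t]\le f(x_t)-\eta(1-\tfrac{L\eta}{2})\norm{\nabla f(x_t)}^2+\tfrac{L\eta^2}{2}V_t^2$. With $\eta\le 1/L$ the coefficient satisfies $1-\tfrac{L\eta}{2}\ge\tfrac12$; rearranging, taking total expectations, and telescoping over $t=1,\dots,T$ (using $\E f(x_{T+1})\ge f(x^*)$, so $f(x_1)-\E f(x_{T+1})\le\Delta_1$) gives $\tfrac1T\sum_t\E\norm{\nabla f(x_t)}^2\le\tfrac{2\Delta_1}{T\eta}+\tfrac{\eta L}{T}\sum_t\E V_t^2$, as stated.

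For the convex bound I would instead track $\norm{x_t-x^*}^2$. Expanding $\norm{x_{t+1}-x^*}^2=\norm{x_t-x^*}^2-2\eta\la v_t,x_t-x^*\ra+\eta^2\norm{v_t}^2$ and taking $\E[\cdot\mid x_t]$ gives $\E[\norm{x_{t+1}-x^*}^2\mid x_t]=\norm{x_t-x^*}^2-2\eta\la\nabla f(x_t),x_t-x^*\ra+\eta^2(\norm{\nabla f(x_t)}^2+V_t^2)$. Here I would use the convex-and-smooth inequality $\la\nabla f(x_t),x_t-x^*\ra\ge f(x_t)-f(x^*)+\tfrac1{2L}\norm{\nabla f(x_t)}^2$, so the resulting $-\tfrac{\eta}{L}\norm{\nabla f(x_t)}^2$ term dominates the $+\eta^2\norm{\nabla f(x_t)}^2$ term whenever $\eta\le 1/L$ (the slightly stronger $\eta\le 1/2L$ is what produces the clean constant). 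Dropping the now-nonpositive gradient term leaves $2\eta\,\E[f(x_t)-f(x^*)]\le\E\norm{x_t-x^*}^2-\E\norm{x_{t+1}-x^*}^2+\eta^2\E V_t^2$; summing over $t$, telescoping with $\norm{x_1-x^*}=D$, dividing by $2\eta T$, and applying Jensen's inequality $f(\bar{x}_T)\le\tfrac1T\sum_t f(x_t)$ yields $\E[f(\bar{x}_T)-f(x^*)]\le\tfrac{D^2}{2T\eta}+\tfrac{\eta}{2T}\sum_t\E V_t^2$, which is the claimed bound (the coefficient on the variance term can be kept at $\tfrac{\eta}{2T}$, so the $\tfrac{\eta}{T}$ in the statement is a harmless relaxation).

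There is no genuine obstacle here — this is the standard parallel-SGD argument — but the step that needs the most care is the disappearance of the $\eta^2\norm{v_t}^2$ term from the final bounds: one must expand it as $\norm{\nabla f(x_t)}^2+V_t^2$ and then cancel the squared-gradient part against a negative term coming from smoothness (the descent lemma in the nonconvex case, co-coercivity at the minimizer in the convex case), which is precisely where the step-size restrictions $\eta\le 1/L$ and $\eta\le 1/2L$ are used. The only other point to watch is the independence-across-machines argument that kills the cross terms and collapses $\E[\norm{v_t-\nabla f(x_t)}^2\mid x_t]$ to $V_t^2$, together with careful bookkeeping of the conditional versus total expectations when telescoping.
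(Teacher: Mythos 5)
Your proof is correct and is exactly the standard data-parallel SGD argument that the paper itself relies on: it gives no proof of this theorem, instead citing the literature (and, for the analogous bounds in Theorem 4.1, pointing to Appendix A.1--A.2 of the cited works), where the same descent-lemma/distance-to-optimum recursions with the cross-term cancellation yielding $V_t^2$ appear. All steps check out, including your observation that the convex case actually yields the sharper constant $\eta/(2T)$ on the variance term, of which the stated $\eta/T$ is a harmless relaxation.
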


Note that the error bounds in Theorem \ref{theorem:parallel_sgd_performance} depend on the variance of the gradients.
Furthermore, under assumption \ref{assum:bounded_variance} and by optimizing over $\eta$, the error bound (both for the convex and the nonconvex cases) can be written as:
\begin{equation}\label{eq:parallel_SGD_bound_SIGMA}
    \mathcal{O}\left(\frac{1}{T} + \frac{\sigma}{\sqrt{MT}}\right)
\end{equation}
Up to factors that are independent of $T,M$ and $\sigma$.
As we elaborate next, incorporating compression into parallel SGD in Alg.~\ref{alg:parallel_SGD} alters these bounds, either by increasing the variance term in the case of unbiased compression, or by rendering them obsolete in the case of biased compression.

\subsection{Training with Compressed Gradients}
While the naive parallelization scheme in Alg.~\ref{alg:parallel_SGD} is straightforward, it neglects the communication cost between the machines and the server. With today's computational power, communication serves as the main bottleneck in the learning process.
Consequently, many methods resort to using \emph{compressed} versions of the gradients to reduce the communication cost (see Sec.~\ref{sec:related_work}).
Such compressors can be broadly classified into two main categories:
\begin{enumerate}
    \item[(1)] \textbf{Unbiased compressors}, which for $\omega\geq 0$ and $\forall x\in\R^d$ satisfy:
    \begin{equation}\label{eq:unbiased_compressors_properties}
        \E[C(x)]=x \; ; \; \E[\norm{C(x)-x}^2]\leq \omega\norm{x}^2,
    \end{equation}
    \item[(2)] \textbf{Biased compressors}, which for $0<\alpha\leq 1$ and $\forall x\in\R^d$ satisfy:
    \begin{equation}\label{eq:biased_compressors_properties}
        \E[C(x)]\neq x \; ; \; \E[\norm{C(x)-x}^2] \leq (1-\alpha)\norm{x}^2
    \end{equation}
\end{enumerate}
where the above expectations are w.r.t. the randomization potentially introduced by $C$.
The use of unbiased compressors is usually straightforward as they are easy to incorporate into the parallelization scheme in Alg.~\ref{alg:parallel_SGD} where only the second term will be affected with an increased variance. However, biased compressors generally yield better practical results, since they tend to retain more energy of the compressed entity compared to unbiased counterparts.
Unfortunately, their naive incorporation in Alg.~\ref{alg:parallel_SGD} may fail to converge \cite{biased_compression_Beznosikov_2020}, since now the compressed gradients are not unbiased estimates of the true gradients, and more sophisticated optimization schemes are required \cite{ef_seide_2014,ef21_richtarik_2021}.
We now survey a few popular compressors that are used for training with compressed gradients.

\paragraph{Top-$k$} is a popular compressor which retains the largest $k$ elements in absolute value of a given vector and zeros the rest. Naturally, Top-$k$ is a biased compressor that satisfies Eq.~\eqref{eq:biased_compressors_properties} with $\alpha=k/d$ and $k$ between 1 and $d$. It is generally empirically superior to its prevalent unbiased counterpart, Rand-$k$, which retains $k$ randomly selected elements of a vector.
Moreover, we consider a generalization of Top-$k$, which we term \emph{$s$-segmented Top-$k$}, or $s$-Top-$k$, which sorts a given vector of length $d$, divides it into segments of length $s$ (except perhaps the last segment), and retains the $k$ segments with the largest norm. Accordingly, $\alpha = sk/d$ and $k$ ranges from $1$ to $\ceil*{d/s}$. Note that regular Top-$k$ can be recovered from its generalized variant when $s=1$.

\paragraph{Bit-wise compressors}
are methods that utilize binary representation to compress numerical data. In our setting, we perform bit-wise compression of the binary representation of each element in the gradient vector in an element-wise manner.
There are two common approaches for bit-wise compression:
\begin{itemize}
    \item[(1)] \textbf{Fixed-point compressors.} Fixed-point methods encode numbers with fixed integer and fractional bits. Compression is done by discarding the least significant bits and keeping the $F$ most significant bits, introducing distortion that is bounded by $2^{-F}$ for each element.
    
    \item[(2)] \textbf{Floating-point compressors.} Floating-point methods encode numbers using a \emph{mantissa}, the fractional part, and an \emph{exponent}, the scale factor. Floating-point compressors retain the exponent and the $F$ most significant bits of the mantissa, forming a biased compressor that satisfies Eq.~\eqref{eq:biased_compressors_properties} with $\alpha=1-2^{-F}$.
\end{itemize}

As mentioned, the introduction of compression changes the error bounds of gradient-based methods. Unbiased compression only changes the variance. That is because $\E[C(v_{t,i})] = \E[\E[C(v_{t,i})]|x_t] = \E[v_{t,i}|x_t] = \nabla f_i(x_t)$, namely the compressed gradients are still unbiased estimates of the true gradient. Additionally, note that:
\begin{align*}
    &\quad\;\E[\norm{C(v_{t,i})-\nabla f_i(x_t)}^2 | x_t] \\
    &= \E[\norm{C(v_{t,i})-v_{t,i} + v_{t,i}-\nabla f_i(x_t)}^2 | x_t] \\
    &= \underbrace{\E[\norm{C(v_{t,i})-v_{t,i}}^2 | x_t]}_{\sigma_{comp}^2} + \underbrace{\E[\norm{v_{t,i}-\nabla f_i(x_t)}^2 | x_t]}_{\sigma^2},
\end{align*}
and simply plugging this updated variance term into the known bounds of Theorem \ref{theorem:parallel_sgd_performance} yields the corresponding error bounds.
The use of biased compressors, although empirically superior to their prevalent unbiased counterparts, requires a different treatment to account for the bias they introduce, which often hinders parallelization and affects the error bounds \cite{EF21_SGDM_fatkhullin_2023}. 

In our work, we suggest a novel approach to construct unbiased versions of popular compressors such that we retain the most important information (as biased compressors) without hindering parallelization (as unbiased compressors). We aim to construct new enhanced estimators that achieve the best of both worlds using a novel compression scheme based on Multilevel Monte Carlo on which we elaborate next.

\subsection{Multilevel Monte Carlo methods}
Monte Carlo methods construct a variance-reduced estimator for the expectation of some random variable $X$ using an ensemble of independent stochastic samples.
In its simplest form, given \emph{unbiased} i.i.d. samples $\{X^{(j)}\}_{j=1}^N$ of $X$, such that $\E[X^{(j)}]=\E[X], \forall j\in[N]$, a Monte Carlo estimate of $\E[X]$ is given by $\frac{1}{N}\sum_{j=1}^N X^{(j)}$. This estimator enjoys a reduced variance by a factor of $1/N$ compared to that of the individual samples. This method implicitly assumes that the cost and quality (variance) of each sample are identical.

Multilevel Monte Carlo (MLMC) methods \cite{MLMC_giles_2013} generalize this to a setting where we can access samples of increasing quality but at an increasing cost. MLMC methods also obviate the need for unbiased samples, unlike regular Monte Carlo. Namely, given samples $X^{l,(j)}$ with variance $V^l$ and cost $K^l$, for $j\in[N]$ and $l\in[L]$, where typically $V^l$ \emph{decreases} while $K^l$ \emph{increases} with $l$, the MLMC estimator of $\E[X]$ is given by:
\begin{equation}
    \tilde{X} \triangleq X^0 + \frac{1}{p^l}(X^l - X^{l-1}),\quad\text{where}\quad l\sim p^l,
\end{equation}
where $\{p_l\}_{l=1}^L$ is a non-zero probability distribution over the levels $l\in[L]$ and $X^l$ and $X^{l-1}$ are some estimators of $\E[X]$ based on samples of levels $l$ and $l-1$, respectively. One of the most intriguing properties of the MLMC estimator is that it is a naturally \emph{unbiased} estimator of the highest-level expectation, namely $\E[\tilde{X}] = \E[X^L]$.
Furthermore, MLMC methods effectively transduce bias into variance. This important property will play a central role in our method,
where $X^L$ will be an unbiased estimate of $\E[X]$, implying that $\tilde{X}$ is an unbiased estimate of $\E[X]$.

\section{Multilevel Monte Carlo Parallel SGD}

Motivated by the trade-off between biased compressors, which have superior performance but suffer worse theoretical guarantees, and unbiased compressors, which exhibit the opposite, we set to explore a method to bridge this gap. Namely, we pose the following question:
\begin{center}
    \emph{Can we simultaneously utilize the superior performance of biased compressors and enjoy the better theoretical guarantees of unbiased compressors?}
\end{center}
To address this, we propose a novel method that exploits the properties of MLMC estimators and allows us to use biased compressors without adversely affecting the theoretical convergence guarantees. Our idea is to generate MLMC estimators of the biased gradient compressions and use those to update the model. This way, although the compressed gradients can be biased, their MLMC estimators are always unbiased, but are typically accompanied by a slightly increased variance.

Each compressor (e.g., Top-$k$, $s$-Top-$k$, bit-wise compressors, etc.) typically has a parameter that tunes the extent of compression. For example, a smaller $k$ in Top-$k$ or $s$-Top-$k$ translates to a more aggressive compression. We define the "estimate levels" $l\in[L]$ of the MLMC estimate in correlation with these parameters, such that lower levels correspond to more aggressive compression, while higher levels correspond to a softer compression. For efficiency, we incorporate this into a new class of compressors, which we term \emph{Multilevel Compressors}, and define it as follows.
\begin{definition}\label{def:multilevel_compressor}
    $C^l:\R^n\rightarrow\R$ is a multilevel compressor, where $l\in[L]$ corresponds to the compression level and the highest level $L$ corresponds to no compression, i.e., $\forall v\in\R^d: C^L(v)=v$.
\end{definition}
For example, in the case of Top-$k$ and $s$-Top-$k$, the levels $l$ correspond to the parameter $k$. Thus, lower levels lead to a worse estimate of the original gradient but a lower communication cost, while higher levels yield a better estimate but a higher communication cost, and the highest level $L$ corresponds to no compression at all (e.g., Top-$k$ with $k=d$, or $s$-Top-$k$ with $k=\ceil*{d/s}$).

Concretely, given a multilevel compressor $C^l$, where $l\in[L]$, non-zero level probabilities $\{p^l\}_{l=1}^L$, and an uncompressed stochastic gradient $v_{t,i}$, the MLMC gradient estimate of $v_{t,i}$ is given by:
\begin{equation}\label{eq:MLMC_gradient_estimate_definition}
    \tilde{g}_{t,i} = g_{t,i}^0 + \frac{1}{p^l}(g_{t,i}^l - g_{t,i}^{l-1}), \quad \text{where} \quad l\sim p^l
\end{equation}
where $g_{t,i}^{l} = C^{l}(v_{t,i}), g_{t,i}^{l-1} = C^{l-1}(v_{t,i})$, and we define $g_{t,i}^0 = 0$ (e.g., Top-$k$ with $k=0$). This MLMC compression scheme yields an unbiased estimate of the true gradient in step $t$, $\nabla f_i(x_t)$, as we formalize in Lemma~\ref{lemma:unbiasedness_of_MLMC_gradients}.
\begin{lemma}\label{lemma:unbiasedness_of_MLMC_gradients}
    For any multilevel compressor $C^l, l\in[L]$ and any non-zero probabilities $\{p^l\}_{l=1}^L$, the MLMC estimator $\tilde{g}_{t,i} \triangleq g_{t,i}^0 + \frac{1}{p^l}(g_{t,i}^l - g_{t,i}^{l-1})$, where $l\sim p^l$,
    is a conditionally unbiased estimate of the true gradient, $\nabla f_i (x_t)$. Namely: $\E[\tilde{g}_{t,i}|x_t] = \nabla f_i (x_t), \forall t\in[T], \forall i\in[M]$.
\end{lemma}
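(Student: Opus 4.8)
The plan is to prove the claim by a double application of the tower rule, peeling off the randomness of the level draw $l\sim p^l$ first and the randomness of the stochastic gradient (together with any internal randomness of the compressors) second. This is essentially the instantiation of the general MLMC identity $\E[\tilde{X}] = \E[X^L]$ recalled in the background, specialized to the case $X^l = g_{t,i}^l$, with the extra observation that the highest level carries no compression. Throughout, we condition on $x_t$ and treat $t\in[T]$, $i\in[M]$ as fixed.

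First I would introduce the $\sigma$-algebra $\mathcal{G}$ generated by $x_t$ together with $z_{t,i}$ and all internal randomness of $C^1,\dots,C^L$, so that conditioned on $\mathcal{G}$ the only remaining source of randomness in $\tilde{g}_{t,i}$ is the level index $l$, drawn independently according to $\{p^l\}_{l=1}^L$. Since $g_{t,i}^0 = 0, g_{t,i}^1, \dots, g_{t,i}^L$ are all $\mathcal{G}$-measurable (each $g_{t,i}^l = C^l(v_{t,i})$ is realized once), averaging over $l$ gives
\begin{equation*}
    \E\!\left[\tilde{g}_{t,i} \,\middle|\, \mathcal{G}\right] = g_{t,i}^0 + \sum_{l=1}^L p^l \cdot \frac{1}{p^l}\bigl(g_{t,i}^l - g_{t,i}^{l-1}\bigr) = g_{t,i}^0 + \sum_{l=1}^L \bigl(g_{t,i}^l - g_{t,i}^{l-1}\bigr) = g_{t,i}^L,
\end{equation*}
where the importance weights $1/p^l$ cancel exactly because $p^l>0$ for every $l$, the $g_{t,i}^0$ term keeps coefficient one because $\sum_l p^l = 1$, and the remaining sum telescopes. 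By Definition~\ref{def:multilevel_compressor}, $g_{t,i}^L = C^L(v_{t,i}) = v_{t,i}$. Applying the tower rule once more with $\sigma(x_t)\subseteq\mathcal{G}$,
\begin{equation*}
    \E[\tilde{g}_{t,i}\mid x_t] = \E\!\left[\E[\tilde{g}_{t,i}\mid\mathcal{G}]\,\middle|\,x_t\right] = \E[v_{t,i}\mid x_t] = \nabla f_i(x_t),
\end{equation*}
the last equality being the conditional unbiasedness of the uncompressed stochastic gradients $v_{t,i} = \nabla f_i(x_t, z_{t,i})$, $z_{t,i}\sim\D_i$. Since $t,i$ were arbitrary, the claim follows.

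There is no genuinely hard analytic step here; the content is the telescoping identity plus the independence of the level draw. The only points needing care are bookkeeping-level: (i) $g_{t,i}^l$ must denote a single realization shared between the $l$-th and $(l{+}1)$-th telescoping terms (which it is, by Eq.~\eqref{eq:MLMC_gradient_estimate_definition}), so that no assumption on how the $C^l$ couple their randomness is required; (ii) the convention $g_{t,i}^0 = 0$ is immaterial to unbiasedness, as the telescoping collapses to $g_{t,i}^L$ for any $g_{t,i}^0$; and (iii) the non-degeneracy $p^l>0$ is exactly what makes the importance weights finite and the cancellation valid.
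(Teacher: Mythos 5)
Your proof is correct and follows essentially the same route as the paper's: condition away the level draw first so that the importance weights cancel and the sum telescopes to $g_{t,i}^L = v_{t,i}$, then use $C^L(v)=v$ and the conditional unbiasedness of the stochastic gradient. Your explicit handling of the compressors' internal randomness via the $\sigma$-algebra $\mathcal{G}$ is a slightly more careful bookkeeping of what the paper does implicitly, but the argument is the same.
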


We defer the proof to App. \ref{appendix:proof_lemma_unbiased_MLMC}. Intuitively, our MLMC block can be thought of as a black box that takes the stochastic gradient, a compressor (e.g., $s$-Top-$k$), and a probability distribution over the compression levels (e.g., the values of $k$) and outputs an unbiased estimate of the true gradient. The probability distribution is optimized to minimize the variance of the MLMC estimator. In some cases, we show that the probability distribution can be chosen in an \emph{adaptive} manner, per sample, to optimize the variance for each sample independently.
Furthermore, since our method replaces the stochastic gradients with their MLMC estimates, which are also unbiased (see Lemma~\ref{lemma:unbiasedness_of_MLMC_gradients}), the error bounds in Theorem \ref{theorem:parallel_sgd_performance}, for the convex and the nonconvex case, remain largely the same and only the variance term is affected.

We formalize our method in Alg.~\ref{alg:MLMC_CDP_SGD}, where each machine $i\in[M]$: (1) computes the gradient $v_{t,i}$ based on one stochastic sample $z_{t,i}$,
(2) samples a compression level $l\in[L]$ according to a predefined probability distribution $\{p^l\}_{l=1}^L$, (3) constructs the MLMC gradient $\tg_{t,i}$ according to Eq.~\eqref{eq:MLMC_gradient_estimate_definition}, and (4) sends it back to the server. The server aggregates the MLMC gradients and updates the model.
Note that while the general template of Alg.~\ref{alg:MLMC_CDP_SGD} requires two compressions in each iteration for the levels $l$ and $l-1$, in certain cases computing the residual $g_{t,i}^l - g_{t,i}^{l-1}$ can be done efficiently without explicitly calculating each term, and it can be transmitted cheaply as well. For example, for Top-$k$, $g_{t,i}^l - g_{t,i}^{l-1}$ includes only the $l$'th largest element (in absolute value), and for $s$-Top-$k$, the residual includes the segment of length $s$ with the $l$'th largest norm.

\begin{algorithm}
   \caption{MLMC-Compressed Parallel SGD}\label{alg:MLMC_CDP_SGD}
\begin{algorithmic}
    \STATE {\bfseries Input:} initialization $x_1$, step-size $\eta$, multilevel compressor $C^l$, and level probabilities $\{p^l\}_{l=1}^L$
    \FOR{$t=1$ {\bfseries to} $T$}
        \STATE The server broadcasts $x_t$ to machines $i=1,..,M$
        \FOR{$i=1$ {\bfseries to} $M$ {\bfseries in parallel}} 
            \STATE Sample $z_{t,i}$ from local dataset $\D_i$
            \STATE Compute $v_{t,i} = \nabla f_i(x_t,z_{t,i})$
            \STATE Sample $l \sim p^l$
            \STATE Compress $g_{t,i}^l = C^l(v_{t,i}),\; g_{t,i}^{l-1} = C^{l-1}(v_{t,i})$
            \STATE Construct $\tg_{t,i} = g_{t,i}^0 + \frac{1}{p^l}(g_{t,i}^l - g_{t,i}^{l-1})$
            \STATE Send $\tg_{t,i}$ to server
        \ENDFOR
    \STATE Server aggregates: $\tg_t = \frac{1}{M}\sum_{i=1}^M \tg_{t,i}$
    \STATE Server updates: $x_{t+1} = x_t - \eta \tg_t$
    \ENDFOR
\end{algorithmic}
\end{algorithm}

Note that the optimization scheme in Alg.~\ref{alg:MLMC_CDP_SGD} is very similar to that of Alg.~\ref{alg:parallel_SGD} (regular data-parallel SGD). That is thanks to the unbiasedness of the MLMC estimates
(Lemma \ref{lemma:unbiasedness_of_MLMC_gradients}).

In the next subsections, we analyze our algorithm and derive the optimal level probabilities that minimize the MLMC estimate variance for popular baseline compressors, and for special cases of gradient distributions that arise in deep learning models.
\vspace{0.2cm}
\subsection{MLMC-Compression Using Bit-Wise Compressors}
\vspace{0.2cm}
A popular compression method used in distributed learning is bit-wise compression, especially \emph{fixed-point} and \emph{floating-point} compression \cite{ef_seide_2014,Dryden2016quantize_data_parallel,chmiel2021neural_lognormal}. We now discuss the \emph{fixed-point}-based MLMC compression scheme.
The analysis of \emph{floating-point} MLMC compression is similar but does not enjoy the same compression rate since the \emph{exponent} must always be transmitted. We defer the full analysis to App.~\ref{appendix:floating_point_analysis}.

Since fixed-point compressors operate in an element-wise manner, we consider some entry $e_{t,i}$ of a gradient vector $v_{t,i}$. Assuming $|e_{t,i}| \leq 1$ (note that we can divide the entries of $v_{t,i}$ by the largest entry and transmit it as well), $e_{t,i}$ can be written as a 64-bit \emph{fixed-point} binary number, as follows:
\begin{equation}
    e_{t,i}=(-1)^{b_0}\sum_{j=1}^{63}b_j2^{-j},
\end{equation}
where $b_j\in\{0,1\}$ is the $j$-th bit in the binary representation. For each entry $e_{t,i}$, the multilevel fixed-point compressor $C^l$ truncates the sum to $l$ elements, with $l$ ranging between $1$ and $63$. The resulting distortion introduced by the compression is bounded by $2^{-l}$ for each entry.

We incorporate the fixed-point compressor into the MLMC compression scheme. Each entry of the residual $g_{t,i}^l - g_{t,i}^{l-1}$ in this case consists of two bits, one information bit and one sign bit. Therefore, the transmission cost of the MLMC gradient, $\tg_{t,i}$, is the cost of transmitting two bits for each entry in the residual vector, $64$ additional bits for the maximum entry, and $\lceil\log_2(63)\rceil$ for $l$, i.e., $2d + 64 + \lceil\log_2(63)\rceil$ bits in total in each iteration for each machine. Note that when $d\gg1$ (which is often the case in deep learning), this compression scheme transmits approximately $2d$ bits in each iteration, compared to $64d$ bits for the uncompressed vectors. This is a $\times32$ improvement in communication costs. Furthermore, the variance-minimizing level probabilities are formalized in Lemma \ref{lemma:optimal_fixed_point_distribution} (proof in App.~\ref{appendix:proof_of_lemma_optimal_fixed_point}).

\begin{lemma}\label{lemma:optimal_fixed_point_distribution}
    The optimal probability distribution that minimizes the variance of the fixed-point MLMC estimator is given by:
    \begin{equation}
        p^l = \frac{2^{-l}}{1-2^{-63}}.
    \end{equation}
\end{lemma}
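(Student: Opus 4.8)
The plan is to reduce the statement to a one-line convex optimization over the probability simplex. First I would fix $t$ and $i$ and condition on the stochastic gradient $v \triangleq v_{t,i}$. Since fixed-point truncation is deterministic, the compressed vectors $g^l \triangleq C^l(v)$, and hence the residuals $Y^l \triangleq g^l - g^{l-1}$, are deterministic given $v$, so the only randomness in $\tg_{t,i} = g^0 + \tfrac{1}{p^l}(g^l - g^{l-1}) = \tfrac{1}{p^l}Y^l$ (using $g^0 = 0$) is the level $l \sim p^l$. Then $\E[\norm{\tg_{t,i}}^2 \mid v] = \sum_{l=1}^{63} (p^l)^{-1}\norm{Y^l}^2$, and since $\E[\tg_{t,i}\mid v] = \sum_{l=1}^{63} Y^l = C^{63}(v) - C^0(v) = v$ by telescoping (equivalently \cref{lemma:unbiasedness_of_MLMC_gradients}, noting $C^{63}$ is the identity), the conditional compression variance equals
\[
\E\big[\norm{\tg_{t,i}-v}^2 \,\big|\, v\big] \;=\; \sum_{l=1}^{63}\frac{\norm{Y^l}^2}{p^l} \;-\; \norm{v}^2 .
\]
Taking the outer expectation and using the bias--variance split $\E[\norm{\tg_{t,i}-\nabla f_i(x_t)}^2\mid x_t] = \E[\norm{\tg_{t,i}-v_{t,i}}^2\mid x_t] + \E[\norm{v_{t,i}-\nabla f_i(x_t)}^2 \mid x_t]$ (the cross term vanishes because $\E[\tg_{t,i}\mid v]=v$), the only part depending on $\{p^l\}$ is $\E\big[\sum_l \norm{Y^l}^2/p^l\big]$.

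Second, I would remove the dependence on $v$. Because $C^l$ keeps the first $l$ magnitude bits of each coordinate, every coordinate of $Y^l$ is either $0$ or $\pm 2^{-l}$, hence $\norm{Y^l}^2 \le d\,2^{-2l}$ for every admissible $v$; moreover this bound is attained for the vector all of whose coordinates equal $\pm(1-2^{-63})$ (all $63$ bits set), simultaneously for all $l$. Dropping the nonpositive term $-\norm{v}^2$, the estimator variance is therefore bounded by $\sum_{l=1}^{63} d\,2^{-2l}/p^l$, a $v$-independent quantity that is actually exact in the worst case; the variance-minimizing (worst-case-optimal) distribution is the minimizer of this quantity over $\{p^l>0,\ \sum_l p^l = 1\}$.

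Finally, writing $d\,2^{-2l} = (\sqrt d\,2^{-l})^2$ and applying Cauchy--Schwarz gives $\sum_{l=1}^{63} (\sqrt d\,2^{-l})^2/p^l \ge \big(\sum_{l=1}^{63}\sqrt d\,2^{-l}\big)^2 \big/ \sum_{l=1}^{63}p^l = d\,(1-2^{-63})^2$, with equality iff $p^l \propto \sqrt d\,2^{-l}\propto 2^{-l}$ (the same follows from the Lagrange condition $\partial_{p^l}\big[\sum_l d\,2^{-2l}/p^l + \lambda(\sum_l p^l-1)\big]=0$). Normalizing with the geometric sum $\sum_{l=1}^{63} 2^{-l} = 1-2^{-63}$ yields $p^l = 2^{-l}/(1-2^{-63})$, as claimed.

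The main obstacle is conceptual rather than computational: the exact variance minimizer is $p^l \propto \norm{Y^l}$, which is sample-dependent, so a single fixed distribution can only be optimal for a $v$-independent surrogate; the step that needs care is justifying that the uniform bound $\norm{Y^l}^2 \le d\,2^{-2l}$ is the correct surrogate, which it is precisely because it is tight in the worst case, making $p^l \propto 2^{-l}$ exactly minimax-optimal over gradients with coordinates bounded by $1$. The remaining arithmetic (the variance identity and the Cauchy--Schwarz step) is routine.
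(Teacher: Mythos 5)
Your proposal is correct and follows essentially the same route as the paper's proof: write the second moment as $\sum_{l}(\text{residual energy at level }l)/p^l$, replace the bit-dependent residual energy by its worst-case value (all bits set, i.e.\ $b_l=1$), and minimize the resulting surrogate over the simplex (the paper uses the Lagrangian condition; your Cauchy--Schwarz step is an equivalent way to solve the same problem). You are in fact somewhat more careful than the paper in flagging that the resulting distribution is only minimax-optimal for the $v$-independent surrogate rather than optimal per sample, which is a fair and worthwhile clarification but not a different argument.
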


\subsection{MLMC-Compression Using Top-$k$}
Given any vector $v\in\R^d$, Top-$k$ retains its largest $k$ elements (in absolute value) and zeros the rest. Namely, Top-$k$ is a biased compressor with $\alpha=k/d$, whose distortion satisfies the following bound for any vector $v\in\R^d$ (note that Top-$k$ is deterministic):
\begin{equation}\label{eq:non_adaptive_distortion_bound_top_k}
    \norm{C(v) - v}^2 \leq (1-\alpha)\norm{v}^2
\end{equation}
Similarly to the analysis with bit-wise compressors, we wish to find the optimal probability distribution $p^l$ over the compression levels $l\in[L]$.
However, note that the bound in Eq.~\eqref{eq:non_adaptive_distortion_bound_top_k} is a worst-case bound, and the equality is satisfied only when $v$ is uniform. Fortunately, in practice and especially in Deep Learning, we often encounter non-uniform gradients \cite{Glorot2010feedforward_grads}. This key observation serves as motivation for developing more adaptive methods to close this gap.

We exploit this often-overlooked property and use a tighter \emph{adaptive} bound for each sample to further enhance our method.
For a given vector $v_{t,i}\in\R^d$, the distortion introduced by Top-$k$ and some compression level $l\in[L]$ can be written as follows:
\begin{equation}\label{eq:adaptive_distortion_bound_top_k}
    \norm{C^l(v_{t,i})-v_{t,i}}^2 = (1-\alpha_{t,i}^l)\norm{v_{t,i}}^2
\end{equation}
where $0 < \alpha_{t,i}^l \leq 1$ is chosen appropriately such that the equality is satisfied. Eq.~\eqref{eq:adaptive_distortion_bound_top_k} describes the tightest possible bound (an equality) on the distortion introduced by the compressor, and this bound is different (adaptive) for different vectors $v_{t,i}$.

Additionally, note that when using Top-$k$ with our method in Alg.~\ref{alg:MLMC_CDP_SGD}, the residual $g_{t,i}^l - g_{t,i}^{l-1}$ consists only of one term that corresponds to the $l$'th largest element (in absolute value) of the uncompressed stochastic gradient $v_{t,i}$. Thus, the communication cost in this case will be the cost of transmitting one entry. Similarly, for $s$-Top-$k$, $g_{t,i}^l - g_{t,i}^{l-1}$ consists of the $l$'th largest segment (in norm) of $v_{t,i}$ (containing $s$ entries, at most), thus the communication cost will be that of transmitting $s$ numbers.

Given this insight, in Lemma \ref{lemma:optimal_topk_distribution}, we exploit this adaptive bound and use it to derive an adaptive probability distribution over the compression levels that minimizes the variance of the MLMC gradient in each iteration.
\begin{lemma}\label{lemma:optimal_topk_distribution}
    Given any multilevel compressor $C^l$, the optimal probability distribution that minimizes the variance of MLMC estimator in iteration $t\in[T]$ and for machine $i\in[M]$ is given by:
    \begin{equation}
        p_{t,i}^l = \frac{\Delta_{t,i}^l}{\sum_{l'=1}^L\Delta_{t,i}^{l'}}
    \end{equation}
\end{lemma}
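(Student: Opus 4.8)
The plan is to turn the statement into a finite-dimensional convex optimization over the probability simplex and solve it in closed form. First I would use Lemma~\ref{lemma:unbiasedness_of_MLMC_gradients}, which gives $\E[\tg_{t,i}\mid x_t]=\nabla f_i(x_t)$, together with the tower rule and the fact that, conditioned on $x_t$, $\tg_{t,i}$ is an unbiased estimate of $v_{t,i}$, to decompose the only term in Theorem~\ref{theorem:parallel_sgd_performance} that the level distribution can affect:
\[
\E[\norm{\tg_{t,i}-\nabla f_i(x_t)}^2\mid x_t] = \E[\norm{\tg_{t,i}-v_{t,i}}^2\mid x_t] + \E[\norm{v_{t,i}-\nabla f_i(x_t)}^2\mid x_t].
\]
Since the second term is independent of $\{p^l\}$ and the first term decouples across machines, and since the level distribution used by machine $i$ influences only machine $i$'s contribution, the global problem of minimizing $\E V_t^2$ separates into $M$ independent problems; moreover, writing the first term by the tower rule as an average over $v_{t,i}$ of its $v_{t,i}$-conditional value, each of these is minimized by minimizing the inner, $v_{t,i}$-conditional error for every realized gradient. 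This is precisely what licenses the per-sample adaptive choice $p_{t,i}^l$.

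Second, I would compute the relevant conditional error explicitly. Fixing $v_{t,i}$, the only remaining randomness is the level $l\sim p^l$; since $g_{t,i}^0=0$ and each $C^l$ is deterministic, $\tg_{t,i}-v_{t,i}=\tfrac{1}{p^l}(g_{t,i}^l-g_{t,i}^{l-1})-v_{t,i}$ has conditional mean zero (the telescoping sum collapses to $C^L(v_{t,i})-0=v_{t,i}$ by Definition~\ref{def:multilevel_compressor}), so
\[
\E[\norm{\tg_{t,i}-v_{t,i}}^2\mid v_{t,i}] = \E[\norm{\tg_{t,i}}^2\mid v_{t,i}] - \norm{v_{t,i}}^2 = \sum_{l=1}^L \frac{\norm{g_{t,i}^l-g_{t,i}^{l-1}}^2}{p^l} - \norm{v_{t,i}}^2 = \sum_{l=1}^L \frac{(\Delta_{t,i}^l)^2}{p^l} - \norm{v_{t,i}}^2,
\]
with $\Delta_{t,i}^l \triangleq \norm{g_{t,i}^l-g_{t,i}^{l-1}}$. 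Thus minimizing the MLMC variance reduces to minimizing $\sum_l (\Delta_{t,i}^l)^2/p^l$ subject to $p^l>0$, $\sum_l p^l=1$ (note the cost is level-independent here — one residual entry for Top-$k$, two bits per coordinate for fixed-point — which is why no cost weighting appears in the optimum).

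Third, I would solve this constrained minimization. The objective is convex in $(p^1,\dots,p^L)$, so the Lagrange stationarity condition $-(\Delta_{t,i}^l)^2/(p^l)^2=\lambda$ forces $p^l\propto \Delta_{t,i}^l$, and normalization gives $p_{t,i}^l=\Delta_{t,i}^l/\sum_{l'}\Delta_{t,i}^{l'}$. Equivalently, one can invoke Cauchy–Schwarz, $\big(\sum_l \Delta_{t,i}^l\big)^2 = \big(\sum_l \tfrac{\Delta_{t,i}^l}{\sqrt{p^l}}\sqrt{p^l}\big)^2 \le \big(\sum_l \tfrac{(\Delta_{t,i}^l)^2}{p^l}\big)\big(\sum_l p^l\big)$, which exhibits the minimum value $(\sum_l \Delta_{t,i}^l)^2$ and the equality case $p^l\propto\Delta_{t,i}^l$ simultaneously.

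The main thing to argue carefully — and the only place that is more than routine — is the reduction step: that optimizing the distribution adaptively per sample is both admissible and globally optimal for $\E V_t^2$, which rests on the tower property, on the independence of compression noise across machines, and on the separability just described. A one-line remark should also dispatch the degenerate case $\Delta_{t,i}^l=0$ for some $l$ (e.g., a Top-$k$ gradient with fewer than $L$ distinct nonzero magnitudes): that level then carries a zero increment and is consistently assigned probability $0$, so the formula and the unbiasedness guarantee of Lemma~\ref{lemma:unbiasedness_of_MLMC_gradients} are unaffected. Beyond this, the argument is a short self-contained computation and I do not anticipate a genuine obstacle.
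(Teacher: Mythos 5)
Your proposal is correct and follows essentially the same route as the paper: both reduce the problem to minimizing $\sum_{l=1}^L (\Delta_{t,i}^l)^2/p^l$ over the probability simplex (using $g_{t,i}^0=0$ so that the second moment of $\tg_{t,i}$ equals this sum) and solve it by Lagrange stationarity, yielding $p^l\propto\Delta_{t,i}^l$. Your additional care about why the per-sample, per-machine reduction is licensed, the Cauchy--Schwarz alternative, and the degenerate case $\Delta_{t,i}^l=0$ are welcome refinements but do not constitute a different argument.
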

where $\Delta_{t,i}^l=\norm{g_{t,i}^l - g_{t,i}^{l-1}}$, and note that for a multilevel compressor based on $s$-Top-$k$, $p_{t,i}^l$ in Lemma \ref{lemma:optimal_topk_distribution} further reduces to $p_{t,i}^l = \frac{\sqrt{\alpha_{t,i}^l-\alpha_{t,i}^{l-1}}}{\sum_{l'=1}^L\sqrt{\alpha_{t,i}^{l'}-\alpha_{t,i}^{l'-1}}}$ (proof in App. \ref{appendix:optimal_topk_proof}).

We incorporate this adaptive probability distribution over the compression levels with our MLMC compression method into a new adaptive optimization scheme formalized in Alg.~\ref{alg:MLMC_CDP_SGD_ADAPTIVE}.
This optimization scheme is similar to that of Alg.~\ref{alg:MLMC_CDP_SGD}, although here, the level probability distribution is chosen in an \emph{adaptive} manner for each sample in each step (see Lemma \ref{lemma:optimal_topk_distribution}). Since the MLMC gradients are unbiased estimates of the true gradients, namely $\E[\tilde{g}_{t,i}|x_t] = \nabla f_i(x_t), \forall t\in[T], \forall i\in[M]$, only the variance term in Theorem~\ref{theorem:parallel_sgd_performance} will be affected.

\begin{algorithm}
   \caption{Adaptive MLMC-Compressed Parallel SGD}\label{alg:MLMC_CDP_SGD_ADAPTIVE}
\begin{algorithmic}
    \STATE {\bfseries Input:} initialization $x_1$, step-size $\eta$, multilevel compressors $\{C^l\}_{l=1}^L$
    \FOR{$t=1$ {\bfseries to} $T$}
        \STATE The server broadcasts $x_t$ to machines $i=1,..,M$
        \FOR{$i=1$ {\bfseries to} $M$ {\bfseries in parallel}} 
            \STATE Sample $z_{t,i}$ from local dataset $\D_i$
            \STATE Compute $v_{t,i} = \nabla f_i(x_t,z_{t,i})$
            \STATE Compute $p_{t,i}^l = \frac{\Delta_{t,i}^l}{\sum_{l'=1}^L\Delta_{t,i}^{l'}}$ for $l\in[L]$
            \STATE Sample $l \sim p_{t,i}^l$
            \STATE Compress $g_{t,i}^l = C^l(v_{t,i}),\; g_{t,i}^{l-1} = C^{l-1}(v_{t,i})$
            \STATE Construct $\tg_{t,i} = g_{t,i}^0 + \frac{1}{p^l}(g_{t,i}^l - g_{t,i}^{l-1})$
            \STATE Send $\tg_{t,i}$ to server
        \ENDFOR
    \STATE Server aggregates: $\tg_t = \frac{1}{M}\sum_{i=1}^M \tg_{t,i}$
    \STATE Server updates: $x_{t+1} = x_t - \eta \tg_t$
    \ENDFOR
\end{algorithmic}
\end{algorithm}

Interestingly, our method recovers importance sampling (IS) techniques (e.g., \cite{biased_compression_Beznosikov_2020}) in certain cases. For example, in the case of Top-$k$, our method is equivalent to sampling and communicating the $l$-th entry of $v_{t,i}$ (scaled by $1/p^l_{t,i}$) with probability $p^l_{t,i}$. However, our method \emph{strictly generalizes} IS techniques, 
as it is compatible with complex structured compressors that do not admit such a coordinate-wise decomposition, and where IS is not naturally defined. This equivalence seems to arise only for sparsification-based methods such as Top-$k$. More involved compression methods exist for which there are no IS-like interpretations, e.g., structured quantization-based methods such as Round-to-Nearest (RTN) \cite{gupta2023RTN} and ECUQ \cite{dorfman2023DoCoFL}.

RTN-based methods, for example, quantize each element of a given vector $v$ by rounding it to the nearest level on a fixed grid. The spacing of this grid is controlled by a quantization step-size $\delta^l$. Namely, the RTN-compression (of level-$l$) of $v$ is given by $C^l_{RTN}(v) = \delta^l \cdot \text{clip}(\text{round}(v/\delta^l),-c,c)$, where $\delta^l = \frac{2c}{2^l-1}$ and "round" rounds each element to its nearest integer. Here, $l\in\mathbb{N}$ corresponds to the compression level. No IS interpretation exists in this case since the difference $g^l_{t,i} - g^{l-1}_{t,i}$ does not necessarily reduce to a simple structure that can facilitate IS.

Moreover, IS requires a specific, nontrivial construction which differs for each compression method, whereas our MLMC compression functions as a plug-and-play framework that works for any series of compressors satisfying Definition \ref{def:multilevel_compressor}, without requiring any additional tuning or specific construction.

\subsection{Special Case Analysis}
Our adaptive MLMC compression scheme seems especially attractive in scenarios in which the entries of the gradients are far from uniform. Interestingly, in many cases when training deep learning models, the gradients appear to have special structures that we can exploit \cite{micikevicius2018mixed}. 
Specifically, \cite{Glorot2010feedforward_grads, shi2019understandingtopksparsificationdistributed} show that gradients in neural networks during training often have Gaussian-like distributions. We demonstrate the adaptability of our method in this case and show that it indeed exploits this special structure for more efficient training. For ease of analysis, let us consider a more relaxed case in which the entries of the gradients decay exponentially in absolute value (note that $ae^{-x^2}\leq be^{-x}, \forall a,x\in\R$, for an appropriate choice of $b$, and therefore this is the more general case). We formalize this in Assumption \ref{assum:exp_decay}.

\begin{assumption}\label{assum:exp_decay}
    For any $t\in[T]$ and any $i\in[M]$, the sorted entries of the gradient $v_{t,i}$ satisfy, for $r_{t,i}>0$:
    \begin{equation*}
        |v_{t,i}(j)|=|v_{t,i}(0)|e^{-\frac{r_{t,i}}{2}j}
    \end{equation*}
\end{assumption}
Note that this assumption implies that most of the energy of $v_{t,i}$ is concentrated in $\approx 1/r_{t,i}$ entries. This observation gives rise to two regimes depending on the relative values of $1/r_{t,i}$ and the length of the vector $d$:
(1) $d$ is very small compared to $1/r_{t,i}$, which implies slow decay and the tail is not negligible. If decay is very slow, i.e., the entries are approximately uniform, our method, Rand-$k$, and Top-$k$ perform similarly;
and (2) $d$ is very large compared to $1/r_{t,i}$, which implies that a the tail of the gradient vector is negligible. Here, we expect our method to have a significant benefit over other unbiased estimators (e.g., Rand-$k$). This is the more interesting case and we formalize it in Lemma \ref{lemma:exp_decay_variance} (Please refer to App. \ref{appendix:proof_lemma_exp_decay_variance} for the full proof).

\begin{lemma}\label{lemma:exp_decay_variance}
    Under Assumption \ref{assum:exp_decay} for sufficiently large $r\cdot d$, Alg.~\ref{alg:MLMC_CDP_SGD_ADAPTIVE} with the $s$-Top-$k$ compressor, and the optimal probabilities in Lemma \ref{lemma:optimal_topk_distribution},
    guarantees $\mathcal{O}\left(\frac{1}{r_{t,i}s}\right)$ variance 
    of the MLMC estimator.
\end{lemma}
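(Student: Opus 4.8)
The plan is to collapse the MLMC variance into a single geometric sum under the optimal level probabilities, evaluate that sum using the exponential-decay profile of Assumption~\ref{assum:exp_decay}, and then exploit $r\cdot d$ being large to discard negligible tail terms.

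First I would record the variance of $\tg_{t,i}$ under the optimal probabilities of Lemma~\ref{lemma:optimal_topk_distribution}. Writing $\Delta_{t,i}^l = \norm{g_{t,i}^l - g_{t,i}^{l-1}}$ and using $g_{t,i}^0=0$ and $C^L(v)=v$, the level-randomized estimator telescopes to $\E_l[\tg_{t,i}] = \sum_{l=1}^L(g_{t,i}^l-g_{t,i}^{l-1}) = v_{t,i}$, so
\[
\E_l\!\left[\norm{\tg_{t,i}-v_{t,i}}^2\right] = \sum_{l=1}^L \frac{(\Delta_{t,i}^l)^2}{p_{t,i}^l} - \norm{v_{t,i}}^2 ,
\]
and substituting $p_{t,i}^l = \Delta_{t,i}^l/\sum_{l'}\Delta_{t,i}^{l'}$ collapses this to $\big(\sum_{l=1}^L \Delta_{t,i}^l\big)^2 - \norm{v_{t,i}}^2$. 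Since $\tg_{t,i}$ is conditionally unbiased for $v_{t,i}$, the cross term vanishes and the inherent gradient noise adds orthogonally (Assumption~\ref{assum:bounded_variance}), so the full variance of the MLMC estimator is $\big(\sum_{l}\Delta_{t,i}^l\big)^2 - \norm{v_{t,i}}^2 + \sigma^2$, and the entire task reduces to estimating $\sum_l \Delta_{t,i}^l$.

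Next, for $s$-Top-$k$ the $l$-th residual $g_{t,i}^l - g_{t,i}^{l-1}$ is exactly the $l$-th largest length-$s$ segment — the sorted coordinates indexed $(l-1)s,\dots,ls-1$ — so under Assumption~\ref{assum:exp_decay} one gets $(\Delta_{t,i}^l)^2 = |v_{t,i}(0)|^2 e^{-r_{t,i}(l-1)s}\cdot\frac{1-e^{-r_{t,i}s}}{1-e^{-r_{t,i}}}$, i.e. $\{\Delta_{t,i}^l\}_l$ is geometric with ratio $e^{-r_{t,i}s/2}$. Summing,
\[
\sum_{l=1}^{L}\Delta_{t,i}^l = |v_{t,i}(0)|\sqrt{\tfrac{1-e^{-r_{t,i}s}}{1-e^{-r_{t,i}}}}\cdot\frac{1-e^{-r_{t,i}sL/2}}{1-e^{-r_{t,i}s/2}},
\qquad
\norm{v_{t,i}}^2 = |v_{t,i}(0)|^2\,\frac{1-e^{-r_{t,i}d}}{1-e^{-r_{t,i}}} .
\]
Because $L=\ceil*{d/s}$ gives $sL\ge d$, for $r_{t,i}d$ large both $e^{-r_{t,i}d}$ and $e^{-r_{t,i}sL/2}$ are negligible; factoring $1-e^{-r_{t,i}s}=(1-e^{-r_{t,i}s/2})(1+e^{-r_{t,i}s/2})$ and subtracting then leaves
\[
\Big(\sum_{l=1}^{L}\Delta_{t,i}^l\Big)^2 - \norm{v_{t,i}}^2 \;\lesssim\; \norm{v_{t,i}}^2\cdot\frac{2e^{-r_{t,i}s/2}}{1-e^{-r_{t,i}s/2}} \;=\; \norm{v_{t,i}}^2\cdot\frac{2}{e^{r_{t,i}s/2}-1}\;\le\; \norm{v_{t,i}}^2\cdot\frac{4}{r_{t,i}s},
\]
using $e^x-1\ge x$. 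Treating the gradient norm (equivalently $\sigma^2$) as a constant, the variance of the MLMC estimator is $\mathcal{O}(1/(r_{t,i}s))$, as claimed.

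The step I expect to be the main obstacle is the ``sufficiently large $r\cdot d$'' bookkeeping: turning the discarded $e^{-r_{t,i}d}$ and $e^{-r_{t,i}sL/2}$ terms into controlled lower-order corrections rather than dropping them by hand, handling the possibly-shorter last segment when $s\nmid d$, and being explicit about absorbing $\norm{v_{t,i}}^2$ into the constant so the $\mathcal{O}(\cdot)$ is clean. Once the variance has been reduced to $\big(\sum_l \Delta_{t,i}^l\big)^2 - \norm{v_{t,i}}^2$, the remaining geometric-sum algebra is routine.
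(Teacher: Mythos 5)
Your proposal is correct and follows essentially the same route as the paper: it starts from the compression-variance identity $\bigl(\sum_{l}\Delta_{t,i}^{l}\bigr)^{2}-\norm{v_{t,i}}^{2}$ obtained by plugging the optimal probabilities of Lemma~\ref{lemma:optimal_topk_distribution} into the second moment, computes the same geometric sums for $\Delta_{t,i}^{l}$ and $\norm{v_{t,i}}^{2}$ under Assumption~\ref{assum:exp_decay}, and extracts the $\mathcal{O}(1/(r_{t,i}s))$ rate. Your final step is in fact slightly cleaner than the paper's: factoring $1-e^{-r_{t,i}s}=(1-e^{-r_{t,i}s/2})(1+e^{-r_{t,i}s/2})$ and using $e^{x}-1\geq x$ gives a genuine inequality valid for all $r_{t,i}s>0$, whereas the paper relies on first-order Taylor approximations under the extra condition $r_{t,i}s\leq 1$.
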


In contrast, the variance of the compressed gradients when using Rand-$k$ with $k=s$ is $\mathcal{O}\left(\frac{d}{s}\right)$ \cite{EF_BV_Condat_2022}. Thus, when $1/r_{t,i} < d$, our MLMC compressor enjoys smaller variance.

\section{Convergence and Parallelization}
We proposed a novel method that bridges the strengths of biased and unbiased methods by leveraging MLMC techniques to generate unbiased estimates of biased-compressed gradients.
Our method statistically retains the more important parts of the gradients (similar to biased compression methods) while still enjoying good parallelization guarantees (like unbiased methods).

Note that since our MLMC gradient estimates are unbiased, a similar error bound to Eq.~\eqref{eq:parallel_SGD_bound_SIGMA} holds, with an additional term that stems from compression. For simplicity, we focus on the homogeneous data setting. We formalize this in the following Theorem (we defer the proof to App.~\ref{appendix:proof_convergence_homogeneous}).

\begin{theorem}\label{theorem:MLMC_convergence}
    Under Assumptions \ref{assum:smoothness_of_loss}-\ref{assum:bounded_variance}, Alg.~\ref{alg:MLMC_CDP_SGD} and Alg.~\ref{alg:MLMC_CDP_SGD_ADAPTIVE} guarantee the following error bounds in the homogeneous convex and nonconvex cases, respectively:
    \begin{align*}
        &\E[f(\bar{x}_T)-f(x^*)] \in \mathcal{O} \left(\frac{D^2L}{T} + \frac{\homega^2 D^2 L}{MT} + \frac{(\homega+1)\sigma D}{\sqrt{MT}}\right) \\
        &\frac{1}{T}\sum_{t=1}^T\E\norm{\nabla f(x_t)}^2 \in \mathcal{O}\left(\frac{\Delta_1 L}{T} + \frac{\homega^2 \Delta_1 L}{MT} + \frac{(\homega+1) \sigma \sqrt{L}}{\sqrt{MT}}\right)
    \end{align*}
\end{theorem}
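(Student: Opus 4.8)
The plan is to reduce Theorem~\ref{theorem:MLMC_convergence} to Theorem~\ref{theorem:parallel_sgd_performance} by plugging in the MLMC estimates in place of the stochastic gradients, and then bounding the resulting variance term $V_t^2$. Since Lemma~\ref{lemma:unbiasedness_of_MLMC_gradients} guarantees that $\E[\tg_{t,i}|x_t] = \nabla f_i(x_t)$, the MLMC gradients satisfy exactly the conditional-unbiasedness hypothesis of Theorem~\ref{theorem:parallel_sgd_performance}, so both the convex bound $\E[f(\bar x_T)-f(x^*)] \le \frac{D^2}{2T\eta} + \frac{\eta}{T}\sum_t \E V_t^2$ and the nonconvex bound $\frac{1}{T}\sum_t \E\norm{\nabla f(x_t)}^2 \le \frac{2\Delta_1}{T\eta} + \frac{\eta L}{T}\sum_t \E V_t^2$ apply verbatim with $V_t^2 = \frac{1}{M^2}\sum_{i=1}^M \E[\norm{\tg_{t,i}-\nabla f_i(x_t)}^2|x_t]$. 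In the homogeneous case $f_i = f$, so this is the averaged conditional second moment of the MLMC error.

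The core technical step is to show $V_t^2 \le \frac{1}{M}\left(\homega^2 \norm{\nabla f(x_t)}^2 + (\homega+1)^2\sigma^2\right)$ for an appropriate constant $\homega$ depending on the compressor and level probabilities. First I would decompose $\tg_{t,i} - \nabla f(x_t) = (\tg_{t,i} - v_{t,i}) + (v_{t,i} - \nabla f(x_t))$; since $\E[\tg_{t,i}|v_{t,i}] = v_{t,i}$ (the MLMC estimate is unbiased for its input, by the same telescoping argument as Lemma~\ref{lemma:unbiasedness_of_MLMC_gradients} applied conditionally on $v_{t,i}$), the cross term vanishes in expectation and $\E[\norm{\tg_{t,i}-\nabla f(x_t)}^2|x_t] = \E[\norm{\tg_{t,i}-v_{t,i}}^2|x_t] + \E[\norm{v_{t,i}-\nabla f(x_t)}^2|x_t]$. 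The second term is at most $\sigma^2$ by Assumption~\ref{assum:bounded_variance}. For the first term, I would establish an ``MLMC variance'' bound of the form $\E[\norm{\tg_{t,i}-v_{t,i}}^2 | v_{t,i}] \le \homega^2 \norm{v_{t,i}}^2$: expanding the definition, $\E\norm{\frac{1}{p^l}(g^l_{t,i}-g^{l-1}_{t,i})}^2 = \sum_l \frac{1}{p^l}\norm{g^l_{t,i}-g^{l-1}_{t,i}}^2$, and using the biased-compressor distortion bound $\norm{g^l_{t,i}-v_{t,i}}^2 \le (1-\alpha^l)\norm{v_{t,i}}^2$ together with a triangle/Young inequality to control $\norm{g^l_{t,i}-g^{l-1}_{t,i}}^2 \lesssim (1-\alpha^{l-1})\norm{v_{t,i}}^2$, one gets a constant $\homega^2 = \sum_l \frac{c(1-\alpha^{l-1})}{p^l}$ (or, with the optimal probabilities of Lemma~\ref{lemma:optimal_fixed_point_distribution} / Lemma~\ref{lemma:optimal_topk_distribution}, an explicit minimized value such as $(\sum_l \sqrt{\alpha^l - \alpha^{l-1}})^2$ for $s$-Top-$k$). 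Then $\norm{v_{t,i}}^2 \le 2\norm{v_{t,i}-\nabla f(x_t)}^2 + 2\norm{\nabla f(x_t)}^2$ and another application of Assumption~\ref{assum:bounded_variance} converts this into the claimed $\homega^2\norm{\nabla f(x_t)}^2 + (\homega+1)^2\sigma^2$ form (up to absorbing constants into $\homega$). Averaging over $i$ and dividing by $M$ gives the bound on $V_t^2$; note the $\norm{\nabla f(x_t)}^2$ term gets a $1/M$ factor because it is common to all machines while the $\homega^2$ multiplies it, which is exactly why it appears as $\homega^2 D^2 L/(MT)$ and $\homega^2\Delta_1 L/(MT)$ in the final bounds rather than without the $1/M$.

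Finally I would substitute this $V_t^2$ bound into the two displays from Theorem~\ref{theorem:parallel_sgd_performance} and optimize over the step-size $\eta$. In the convex case, $\sum_t \E V_t^2 \le \frac{T}{M}(\homega^2 G_T^2 + (\homega+1)^2\sigma^2)$ where $G_T^2 = \frac{1}{T}\sum_t \E\norm{\nabla f(x_t)}^2$; but for the convex rate one instead uses smoothness to bound $\norm{\nabla f(x_t)}^2 \le 2L(f(x_t)-f(x^*))$ and absorbs the resulting term on the left-hand side (a standard ``self-bounding'' argument, valid since $\homega^2 L \eta / M$ is small for the prescribed $\eta$), yielding the three-term bound after tuning $\eta \asymp \min\{1/L, \sqrt{M/(L T)}\cdot(\dots)\}$. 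The nonconvex case is more direct: the $\homega^2 L\eta/(MT)\sum_t\E\norm{\nabla f(x_t)}^2$ term is moved to the left-hand side when $\eta \le M/(2\homega^2 L)$, leaving $\frac{1}{2T}\sum_t\E\norm{\nabla f(x_t)}^2 \le \frac{2\Delta_1}{T\eta} + \frac{\eta L (\homega+1)^2\sigma^2}{M}$, and optimizing $\eta$ over the admissible range $\eta \le \min\{1/L, M/(2\homega^2 L)\}$ produces the stated $\frac{\Delta_1 L}{T} + \frac{\homega^2\Delta_1 L}{MT} + \frac{(\homega+1)\sigma\sqrt{L}}{\sqrt{MT}}$. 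The main obstacle I anticipate is the bookkeeping in the self-bounding step for the convex case and, more substantively, getting the MLMC variance bound with the right dependence — in particular making sure the telescoping residuals $g^l - g^{l-1}$ are controlled by $\norm{v}^2$ uniformly (so that $\homega$ is finite and, ideally, matches the optimized constants from Lemmas~\ref{lemma:optimal_fixed_point_distribution}–\ref{lemma:optimal_topk_distribution}) rather than blowing up when some $p^l$ is tiny; this is where the choice of level probabilities enters, and where one must be careful that $\homega$ is a genuine constant independent of $T$ and $M$.
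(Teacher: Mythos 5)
Your proposal follows essentially the same route as the paper's proof: reduce to the data-parallel SGD bound via the unbiasedness of the MLMC estimates, decompose $V_t^2$ into a compression term bounded by $\homega^2\E\norm{v_{t,i}}^2$ plus the $\sigma^2$ term, convert $\norm{v_{t,i}}^2$ into $\norm{\nabla f(x_t)}^2 + \sigma^2$, apply the smoothness self-bounding inequality (in the convex case) or absorb the gradient-norm term on the left-hand side (in the nonconvex case), and tune $\eta$. The only cosmetic difference is that you kill the cross term by conditional unbiasedness where the paper uses Young's inequality, which changes nothing beyond constants.
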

where $\homega$ is the compression coefficient of our MLMC estimator (see Eq.~\eqref{eq:unbiased_compressors_properties}). Exact calculations of $\homega$ for various compressors are available in App.~\ref{appendix:floating_point_analysis}, \ref{appendix:optimal_topk_proof}, \ref{appendix:proof_lemma_exp_decay_variance}. Note that the middle term is asymptotically negligible compared to the right term, and thus these error bounds are asymptotically identical to those of Parallel-SGD (Theorem \ref{theorem:parallel_sgd_performance}; Eq.~\eqref{eq:parallel_SGD_bound_SIGMA}), with a slightly increased variance due to compression.

In contrast, the error bound for biased methods, e.g., EF21-SGDM (Corollary 3 in \cite{EF21_SGDM_fatkhullin_2023}), which is the current state of the art, is given by (nonconvex case):
\begin{equation*}
    \frac{1}{T}\sum_{t=1}^T \E\norm{\nabla f(x_t)}^2 \in \mathcal{O}\left(\frac{\Delta_1 L}{\alpha T} + \frac{\Delta_1 L\sigma^{1/2}}{\alpha^{1/2}T^{3/4}} + \frac{\Delta_1 L\sigma}{\sqrt{MT}}\right)
\end{equation*}
Thus, our method allows parallelization over $M =\mathcal{O}(T)$, or equivalently $M=\mathcal{O}(\sqrt{N})$, machines without a degradation of performance (where $N$ is the size of the dataset), while EF21-SGDM allows $M = \mathcal{O}(\sqrt{T})$, or equivalently $\mathcal{O}(N^{1/3})$. Moreover, our method complements methods like EF21-SGDM and others (which may be beneficial when $M$ is small), in the regime of \emph{massive} parallelization, i.e., when $M$ is very large.
We defer the analysis to App. \ref{appendix:parallelization_guarantees}.

Our method works in the heterogeneous data setting as well, although a $\mathcal{O}\left(\frac{\homega\xi}{\sqrt{MT}}\right)$ term is added to the error bounds (in the convex and nonconvex cases), where $\xi\geq0$ quantifies the heterogeneity $\norm{\nabla f_i(x) - \nabla f(x)}^2\leq\xi^2, \forall x\in\R^d$. Please refer to App.~\ref{appendix:proof_convergence_heterogeneous} for the full analysis. Moreover, since our MLMC compression method produces unbiased gradient estimates, it can be seamlessly incorporated into more sophisticated optimization templates such as MARINA \cite{MARINA_Gorbunov_2021} or DASHA \cite{DASHA_Tyurin_2023}, which would fully mitigate the heterogeneity term.

\section{Experiments}
We present several deep learning experiments involving fine-tuning BERT \citep{BERT} on GLUE SST-2 \citep{GLUESST2} and CIFAR-10 \citep{CIFAR10} image classification using ResNet18 \citep{ResNet18}. We evaluated the performance of our MLMC-based compressors in comparison to biased and unbiased compressors. Our experiments were implemented using PyTorch and executed on NVIDIA GeForce RTX 4090 GPUs. 

\subsection{Experiments with Sparsification Compressors}
In the first set of experiments, we tested our MLMC-compression technique, with Top-$k$ as a baseline compressor, and optimized the learning rate for each one individually.
We compared the performance of our \textbf{Adaptive MLMC-Top-$k$} 
compressor (Alg.~\ref{alg:MLMC_CDP_SGD_ADAPTIVE}),
the biased compressors \textbf{Top-$k$} and \textbf{EF21-SGDM} \cite{EF21_SGDM_fatkhullin_2023}, and the unbiased compressor \textbf{Rand-$k$}, and \textbf{Uncompressed SGD} as a baseline.
We evaluated two criteria: \emph{communication efficiency} and \emph{iteration efficiency}, which compare the test accuracy of the algorithms as a function of the communication complexity (the number of communicated bits) and as a function of the epochs (the number of iterations).

We present the \emph{communication efficiency} experimental results in Figure~\ref{fig:Bert_TopK_Gbit}, for $M=4$ machines (top quartet) and $M=32$ machines (bottom quartet). Each subplot displays the test accuracy of the compared algorithms, against the number of communicated bits, for various sparsification levels, specifically for $k\in\{0.01n, 0.05n, 0.1n, 0.5n\}$, where $n\approx 1.1\times 10^{8}$ is the number of model parameters. We used a batch size of 16 in all experiments and averaged over 5 seeds. Moreover, we display these results against the number of epochs (iterations) in Figure~\ref{fig:Bert_TopK_Iterations}.

\begin{figure}[ht]
\begin{center}
\centerline{\includegraphics[width=\columnwidth]{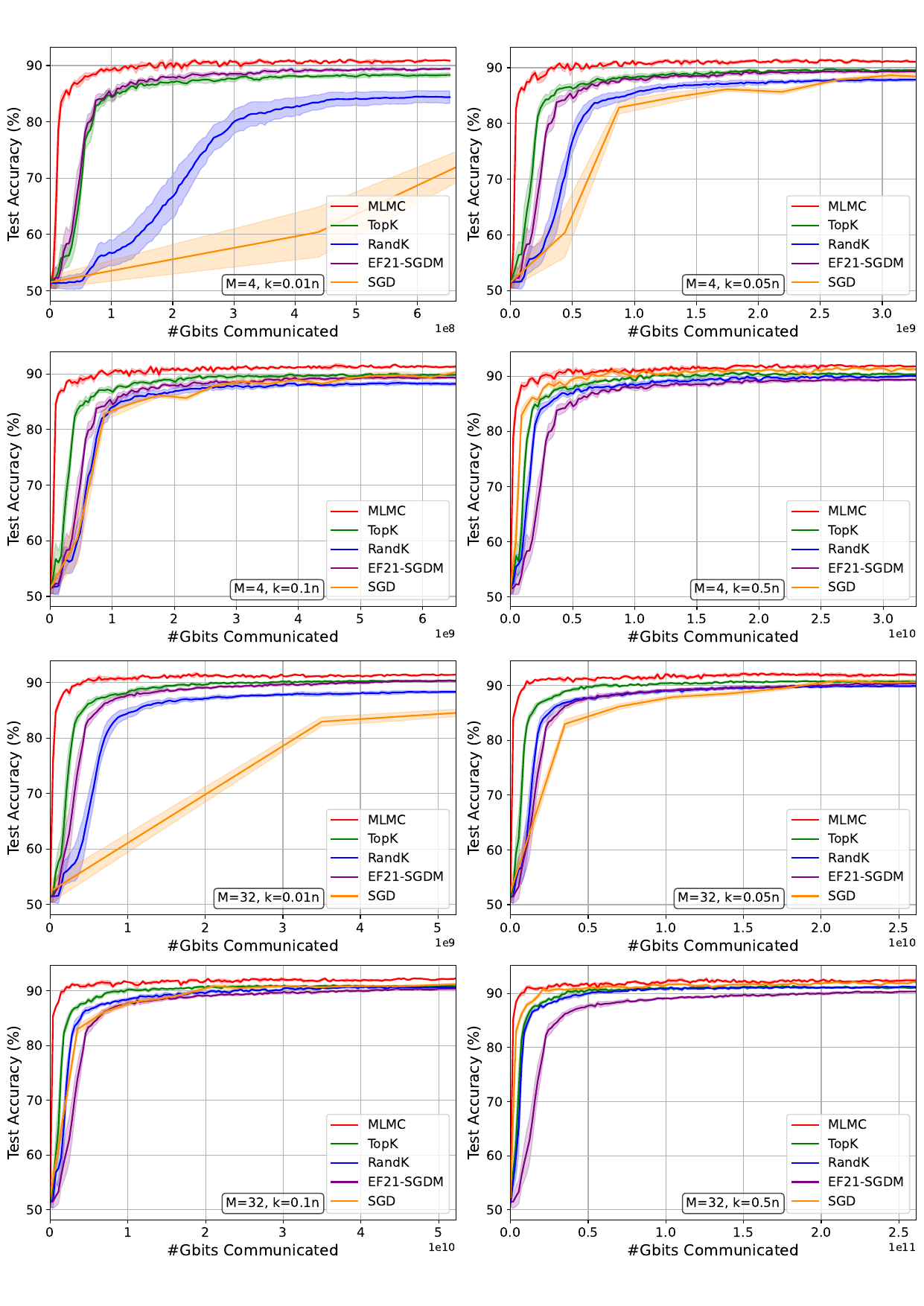}}
\caption{Finetuning BERT on GLUE SST2 \textbf{communication efficiency} comparison of the Adaptive MLMC-Top-$k$ (Alg.~\ref{alg:MLMC_CDP_SGD_ADAPTIVE}), Top-$k$, EF21-SGDM, Rand-$k$, and uncompressed SGD for sparsification levels $k\in\{0.01n, 0.05n, 0.1n, 0.5n\}$, $M=4,32$ machines, and a batch size of $16$ samples, averaged over 5 different seeds.}
\label{fig:Bert_TopK_Gbit}
\end{center}
\vskip -0.5in
\end{figure}

\begin{figure}[ht]
\begin{center}
\centerline{\includegraphics[width=\columnwidth]{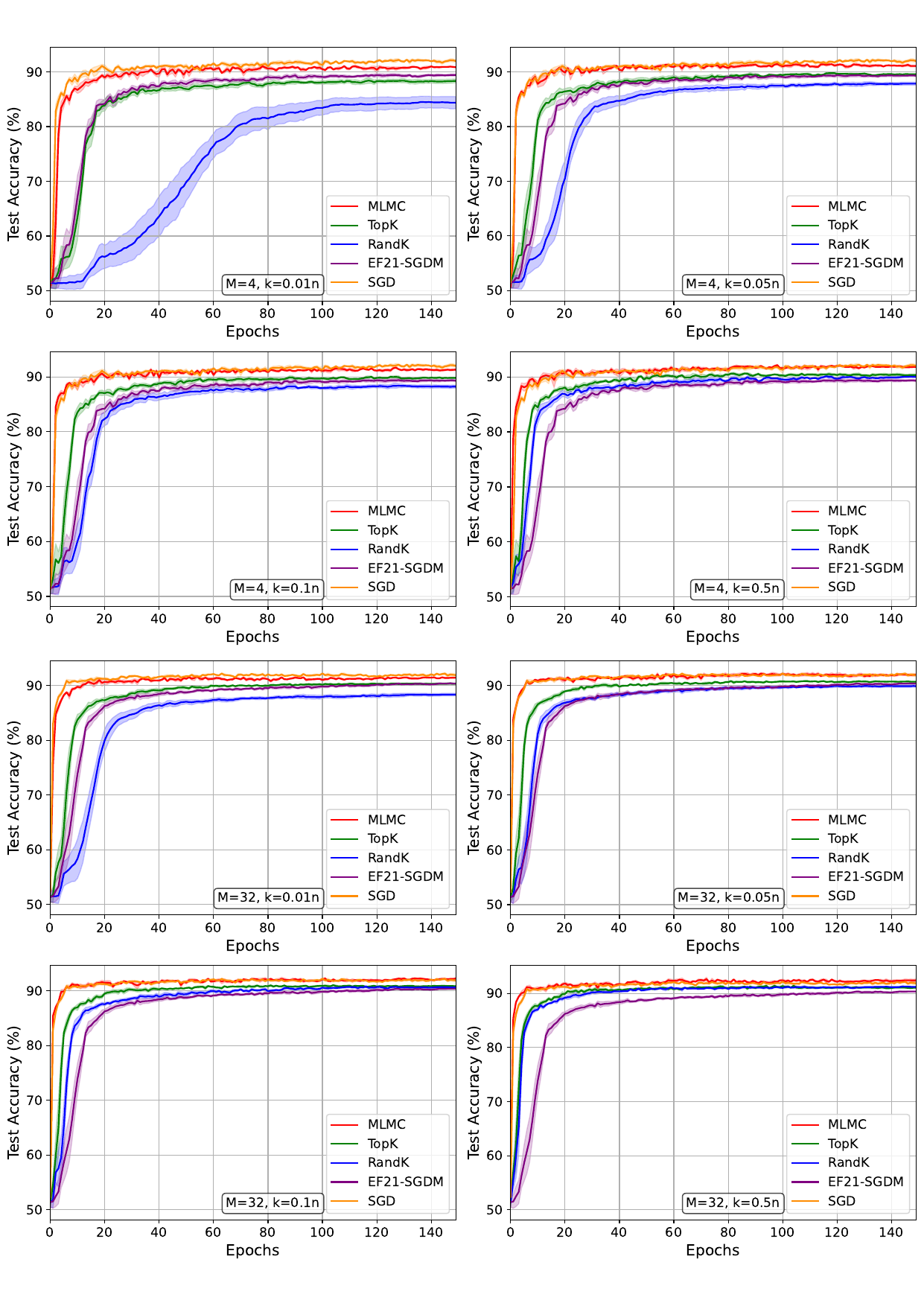}}
\caption{Finetuning BERT on GLUE SST2 \textbf{iteration efficiency} comparison of the Adaptive MLMC-Top-$k$ (Alg.~\ref{alg:MLMC_CDP_SGD_ADAPTIVE}), Top-$k$, EF21-SGDM, Rand-$k$, and uncompressed SGD for sparsification levels $k\in\{0.01n, 0.05n, 0.1n, 0.5n\}$, $M=4,32$ machines, and a batch size of $16$ samples, averaged over 5 different seeds.}
\label{fig:Bert_TopK_Iterations}
\end{center}
\vskip -0.2in
\end{figure}

Figures \ref{fig:Bert_TopK_Gbit}-\ref{fig:Bert_TopK_Iterations} show that our MLMC-compression method outperforms the other methods, both in terms of communication and iteration efficiency. Notably, our method achieves a higher test accuracy for the same number of transmitted bits and enjoys a faster convergence rate compared to other methods across different sparsification levels. Also, our method converges faster for $M=32$ compared to $M=4$, which is consistent with our bounds in Theorem \ref{theorem:MLMC_convergence}. Moreover, Figure \ref{fig:Bert_TopK_Iterations} shows that our method outperforms other compression methods in iteration efficiency, in terms of convergence rate and accuracy, and enjoys the same performance as \emph{uncompressed} SGD, despite using significantly less information. Additional experiments on CIFAR-10 image classification using ResNet18 are available in App. \ref{appendix:additional_experiments_ResNet_TopK}.

\subsection{Experiments with Bit-Wise Quantization}
We evaluated our nonadaptive MLMC-compression method (Alg.~\ref{alg:MLMC_CDP_SGD}) with bit-wise quantization compressors on image classification tasks using the ResNet18 architecture and the CIFAR-10 dataset. We compare the communication efficiency of our method to biased \textbf{2-bit quantization}, unbiased \textbf{2-bit QSGD} \cite{QSGD_Alistarh_2017}, for the same compression level, and \textbf{uncompressed SGD} as a baseline.
We present the results in Figure \ref{fig:FP_figure}.
These results show that also in the case of bit-wise compressors, our method enjoys a significant advantage over the others in terms of communication efficiency, convergence rate, and final test accuracy.
Additional experiments evaluating RTN compressors on BERT GLUE SST2 finetuning are available in App. \ref{appendix:additional_experiments_BERT_RTN}.

\begin{figure}[!ht]
\begin{center}
\centerline{\includegraphics[width=\columnwidth]{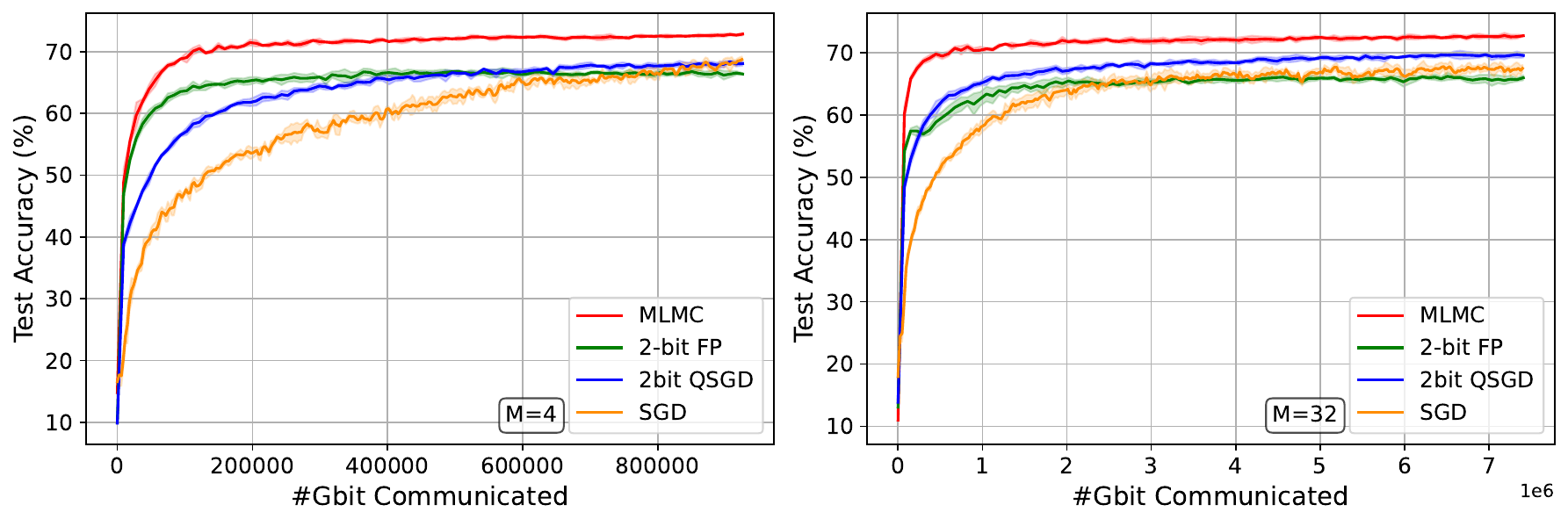}}
\caption{CIFAR-10 image classification using ResNet18, communication efficiency comparison of our Fixed-Point-based MLMC compression method (Alg.~\ref{alg:MLMC_CDP_SGD}), 2-bit Fixed-Point quantization, 2-bit QSGD, and uncompressed SGD, for $M=4$ machines and a batch size of $128$ and for $M=32$ machines and a batch size of $64$, averaged over 5 different seeds.}
\label{fig:FP_figure}
\end{center}
\vskip -0.3in
\end{figure}

\section{Conclusions}
We presented a novel method that bridges the gap between unbiased and biased compression approaches typically used to overcome communication overhead in distributed learning settings. MLMC serves at the heart of our method and facilitates the transduction of bias into variance, combining the strengths of both worlds: the superior empirical performance of biased methods and the strong theoretical guarantees of unbiased techniques. We validated our algorithms on deep learning tasks showcasing their empirical efficiency compared to existing methods.


\section*{Acknowledgments}
This research was partially supported by Israel PBC-VATAT, by the Technion Artificial Intelligent Hub (Tech.AI), and by the Israel Science Foundation (grant No. 3109/24). The second author would like to thank VATAT (through the Israel Council for Higher Education) for supporting this research.

\section*{Impact Statement}
This paper presents work whose goal is to advance the field of Machine Learning. There are many potential societal consequences of our work, none which we feel must be specifically highlighted here.

\bibliography{references}
\bibliographystyle{icml2025}

\newpage
\onecolumn
\appendix
\section{Proof of Lemma ~\ref{lemma:unbiasedness_of_MLMC_gradients}}\label{appendix:proof_lemma_unbiased_MLMC}

\textbf{Lemma~\ref{lemma:unbiasedness_of_MLMC_gradients}} \emph{ For any multilevel compressor $C^l, l\in[L]$, any non-zero level probabilities $\{p^l\}_{l=1}^L$, the MLMC gradient estimator $\tilde{g}_{t,i} \triangleq g_{t,i}^0 + \frac{1}{p^l}(g_{t,i}^l - g_{t,i}^{l-1})$
    is a conditionally unbiased estimate of the true gradient at step $t$, $\nabla f_i (x_t), \forall t\in[T], \forall i\in[M]$. Namely: $\E[\tilde{g}_{t,i}|x_t] = \nabla f_i (x_t)$.}

\emph{Proof.}\\
\begin{align}
    \E[\tilde{g}_{t,i}|x_t] &= \E_{l\sim p^l,\,z_{t,i}\sim \mathcal{D}_i}[g_{t,i}^0 + \frac{1}{p^l}(g_{t,i}^l - g_{t,i}^{l-1})|x_t] \\
    &= \E_{z_{t,i}\sim \mathcal{D}_i}[\E_{l\sim p^l}[g_{t,i}^0 + \frac{1}{p^l}(g_{t,i}^l - g_{t,i}^{l-1})|x_t,z_{t,i}]] \\
    &= \E_{z_{t,i}\sim \mathcal{D}_i}[\sum_{l=1}^Lp^l(g_{t,i}^0 + \frac{1}{p^l}(g_{t,i}^l - g_{t,i}^{l-1}))|x_t] \\
    &\stackrel{(1)}{=} \E_{z_{t,i}\sim \mathcal{D}_i}[g_{t,i}^0 +\sum_{l=1}^L(g_{t,i}^l - g_{t,i}^{l-1})|x_t]\\
    &= \E_{z_{t,i}\sim \mathcal{D}_i}[g_{t,i}^L|x_t]\\
    &\stackrel{(2)}{=}\E_{z_{t,i}\sim \mathcal{D}_i}[\nabla f_i (x_t,z_{t,i})|x_t]\\
    &=\nabla f_i (x_t)
\end{align}

where transition $(1)$ since $\{p^l\}_{l=1}^L$ is a probability distribution, i.e., $\sum_{l=1}^L p^l = 1$. $(2)$ follows since by the definition of multilevel compressors in \ref{def:multilevel_compressor} where the highest level $L$ corresponds to no compression (e.g., top-$k$ with $k=d$).

\section{Analysis of the Floating-Point based MLMC compressor}\label{appendix:floating_point_analysis}

Given an element of the gradient $v$ denoted by $e$, it can be represented as a 64-bit \emph{floating-point} binary number. The floating-point number consists of three parts - the \emph{Sign} denoted by $S$, the \emph{Exponent} denoted by $E$ and the \emph{Mantissa} which is a binary number with digits $\{m_i\}_{i=1}^{52}$. The entry $e$ can be written as follows:
\begin{equation}
    e=(-1)^{S}2^{E-1023}\left(1+\sum_{j=1}^{52}m_j2^{-j}\right)
\end{equation}
The Floating-Point Compressor $C^l(e)$ truncates the sum to $l$ elements, which implies that the resolution will be up to $2^{E-1023}2^{-l}$, and the Compressor's parameter $l$ (which determines the extent of compression) ranges between 1 to 52. Since the Exponent is $E=\lfloor{\log_2(e)}\rfloor+1023$, the Floating-Point biased compressor satisfies Eq.~\eqref{eq:biased_compressors_properties} with $\alpha=1-2^{-l}$.

We apply the MLMC scheme with the Floating-Point Compressor and thus only need to transmit the residual $g_{t,i}^l - g_{t,i}^{l-1}$. The residual has the same Exponent and Sign bits as the original entry, but contains only \emph{one} information bit of the mantissa for every element in the vector. This means that the Floating-Point MLMC compressor needs to only transmit $13d+\log_2(52)$ bits instead of $64d$ bits, where the extra $\log_2(52)$ bits are needed to transmit the sampled $l$. Furthermore, note that for $d\gg1$ those additional bits are negligible, implying $\times\frac{64}{13} \approx \times4.9$ improvement in Communication cost.

Similarly to the Fixed-Point case, the MLMC technique requires a probability distribution to sample the $l$-compression parameter with the compressor. We would like to use the ideal distribution to minimize the variance introduced by the compression process. We formalize this in Lemma~\ref{Lemma_appendix:optimal_prob_floating_point}.

\begin{lemma}\label{Lemma_appendix:optimal_prob_floating_point}
    The optimal probability distribution that minimizes the variance of the Floating-Point MLMC estimator is given by:
    \begin{equation}
        p^l = \frac{2^{-l}}{1-2^{-52}}
    \end{equation}
\end{lemma}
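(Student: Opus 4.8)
\textbf{Proof proposal for Lemma~\ref{Lemma_appendix:optimal_prob_floating_point}.}

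The plan is to set up the variance of the MLMC estimator $\tg_{t,i} = g_{t,i}^0 + \frac{1}{p^l}(g_{t,i}^l - g_{t,i}^{l-1})$ as an explicit function of the probability vector $\{p^l\}_{l=1}^{52}$ and then minimize it over the simplex using a Lagrange multiplier (equivalently Cauchy--Schwarz). First I would recall that, conditioned on the sample $v_{t,i}$, we have $\E[\tg_{t,i}] = \sum_l (g^l - g^{l-1}) = g^L$ by the telescoping argument in Lemma~\ref{lemma:unbiasedness_of_MLMC_gradients}, so the conditional second moment of $\tg_{t,i}$ equals $\sum_{l=1}^{52} p^l \norm{g^0 + \frac{1}{p^l}(g^l - g^{l-1})}^2$, which after expanding the square is $\sum_{l} \frac{1}{p^l}\norm{g^l - g^{l-1}}^2 + (\text{terms independent of } p)$; here $g^0 = 0$ so the cross term drops out entirely. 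Thus minimizing the variance is equivalent to minimizing $\sum_{l=1}^{52} \frac{1}{p^l}(\Delta^l)^2$ subject to $\sum_l p^l = 1$, where $\Delta^l \triangleq \norm{g_{t,i}^l - g_{t,i}^{l-1}}$. This is exactly the generic optimization solved in Lemma~\ref{lemma:optimal_topk_distribution}, whose solution is $p^l \propto \Delta^l$.

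The second step is to bound $\Delta^l$ for the floating-point compressor in a way that makes the optimal $p^l$ sample-independent. The residual $g^l - g^{l-1}$ at level $l$ carries, for each coordinate, a single mantissa bit of weight $2^{E-1023} 2^{-l}$ (the $l$-th mantissa bit), so coordinate-wise its magnitude is at most $2^{E-1023}2^{-l} \le |e| \cdot 2^{-l}$ using $2^{E-1023}\le |e|$ from the floating-point normalization. Summing squares over coordinates gives $(\Delta^l)^2 \le 2^{-2l}\norm{v_{t,i}}^2$, i.e. $\Delta^l \le 2^{-l}\norm{v_{t,i}}$. Since this worst-case bound scales with $l$ as $2^{-l}$ uniformly in the sample, we take $p^l \propto 2^{-l}$ over $l\in\{1,\dots,52\}$, which after normalizing by $\sum_{l=1}^{52} 2^{-l} = 1 - 2^{-52}$ yields $p^l = \frac{2^{-l}}{1-2^{-52}}$ as claimed. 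This mirrors the fixed-point argument in Lemma~\ref{lemma:optimal_fixed_point_distribution} exactly, except the mantissa has $52$ bits rather than $63$, hence the $2^{-52}$ normalization.

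The main obstacle — really the only subtle point — is justifying that one should optimize the \emph{worst-case} surrogate $\sum_l \frac{1}{p^l} 2^{-2l}\norm{v}^2$ rather than the exact (sample-dependent) quantity $\sum_l \frac{1}{p^l}(\Delta^l)^2$: the former is what gives a fixed, non-adaptive distribution, and one must argue this is the relevant target because Alg.~\ref{alg:MLMC_CDP_SGD} fixes $\{p^l\}$ ahead of time (unlike the adaptive Alg.~\ref{alg:MLMC_CDP_SGD_ADAPTIVE}), so the distribution must be chosen to control the variance uniformly over all possible gradient realizations; plugging $p^l \propto 2^{-l}$ into the surrogate makes every term equal and minimizes the resulting bound by the equality case of Cauchy--Schwarz. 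A minor secondary point is handling the boundary levels cleanly: $g^0 = 0$ so the $l=1$ residual is just $g^1$ itself (which is still $\le 2^{-1}\norm{v}$-controlled since $g^1$ is the sign-exponent-plus-first-mantissa-bit truncation and $\norm{g^1 - v}\le 2^{-1}\norm{v}$ does not directly bound $\norm{g^1}$, so I would instead note $\norm{g^1 - g^0} = \norm{g^1} \le \norm{v} + \norm{v - g^1}$, which is $O(\norm{v})$ and absorbed into constants), and $g^{52}$ versus $g^{L}=v$ contributes the final telescoping piece; none of this affects the $2^{-l}$ scaling that drives the answer. The rest is the routine Lagrange/Cauchy--Schwarz computation already carried out in the referenced lemmas.
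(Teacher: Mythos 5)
Your proposal matches the paper's proof in all essentials: both identify that the level-$l$ residual per coordinate is a single mantissa bit of weight $2^{E-1023}2^{-l}$, both replace the sample-dependent residual norm by its worst-case bound so the objective becomes $\sum_{l=1}^{52}\frac{1}{p^l}2^{-2l}$ (up to a common factor), and both solve the resulting constrained minimization via Lagrange multipliers to get $p^l \propto 2^{-l}$ with normalizer $1-2^{-52}$. The boundary worry you raise about $l=1$ is resolved by the paper's implicit convention that the level-$0$ compressor already retains the sign and exponent (so every residual, including the first, is exactly one mantissa bit), under which your argument goes through cleanly.
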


\begin{proof}
The second moment of the Floating-Point MLMC compressor is given by:
\begin{align}
    \E[\norm{\tilde{g}_{t,i}}^2]&=\E \left[{ \norm{g_{t,i}^0 + \frac{1}{p^l}(g_{t,i}^l - g_{t,i}^{l-1})}^2}\right]
\end{align}
where $\norm{\cdot}$ is the $l_2$-norm. Since this compressor operates in an element-wise manner, we consider the $r$-th element in $\tilde{g}_{t,i}$, which we denote by $\tilde{e}_{t,i}^2(r)$. Since $e^0_{t,i}(r)=0$ for every $r$ we obtain:
\begin{align}
    \E \left[{ \left|\frac{1}{p^l}(e_{t,i}^l(r) - e_{t,i}^{l-1}(r))\right|^2}\right]
    &\stackrel{(1)}{=}\sum_{l=1}^{52}\left[p_l{ \left|\frac{1}{p^l}\left(2^{E(r)-1023}\left(1+\sum_{j=1}^{l}m_j(r)2^{-j}\right) - 2^{E(r)-1023}\left(1+\sum_{i=1}^{l-1}m_i(r)2^{-i}\right)\right)\right|^2}\right]\\
    &=\sum_{l=1}^{52}\left[\frac{2^{2E(r)-2026}}{p^l}{ \left(m_l(r)2^{-l}\right)}^2\right]\\
    &\stackrel{(2)}{=}\sum_{l=1}^{52}\left[\frac{2^{2E(r)-2026}}{p^l}m_l(r)2^{-2l}\right]
\end{align}
where $(1)$ follows by the floating-point representation of the $e_{t,i}^l(r)$, and $(2)$ follows since every binary digit $m_l$ satisfies $m_l^2=m_l$ (since $m^l$ can either be $0$ or $1$).
Now, similarly to the proof in appendix~\ref{appendix:proof_of_lemma_optimal_fixed_point}, we wish to find the optimal probability distribution that minimizes the variance (note that the probability distribution should sum to $1$). There are no additional assumptions on the binary number we wish to compress. Namely, for any $l$, $m_l$ can be $1$ or $0$, and we would like to minimize the objective regardless of the values of $m^l$. We formalize this using the following optimization problem:
\begin{align}
    \hat{p}^l= \underset{\{p^l\}_{l=1}^{52}}{\arg\min}\max_{\lambda\ge0} \sum_{l=1}^{52}\left[\frac{2^{2E(r)-2026}}{p^l}2^{-2l}\right]+\lambda\left(\sum_{l=1}^{52}p^l-1\right)
\end{align}
By setting the gradients with respect to $p^l$ and $\lambda$ to zero, we obtain the following:
\begin{equation}
    \sum_{l=1}^{52}\hat{p}^l=1
\end{equation}
\begin{equation}
    \hat{p}^l=\frac{2^{E(r)-1023}}{\sqrt{\lambda}}2^{-l}
\end{equation}
These equations show that $\hat{p}^l$ is proportional to $2^{-l}$. Thus, with proper normalization, by solving for $\lambda$ and extracting $\hat{p}^l$, we have:
\begin{align}
    \hat{p}^{l} = \frac{2^{-l}}{1-2^{-52}}
\end{align}
which concludes our proof.
\end{proof}

Note that we can calculate the optimal variance of the MLMC estimator using the optimal probabilities we obtained above. We first calculate the second moment of some element in the MLMC gradient estimate:
\begin{align}
    \E[\norm{\tilde{e}_{t,i}(r)}^2]
    &=\E \left[{ \norm{\frac{1}{p^l}(e_{t,i}^l(r) - e_{t,i}^{l-1}(r))}^2}\right]
    =\sum_{l=1}^{52}\left[\frac{2^{2E(r)-2026}}{p^l}m_l(r)2^{-2l}\right]
    =\sum_{l=1}^{52}\left[2^{2E(r)-2026}(1-2^{-52})m_l(r)2^{-l}\right]\\
    &=2^{E(r)-1023}(1-2^{-52})\left((2^{E(r)-1023})\left(1+\sum_{l=1}^{52}\left[m_l(r)2^{-l}\right]\right)-(2^{E(r)-1023})\right)\\
    &=2^{E(r)-1023}(1-2^{-52})(e(r)-(2^{E(r)-1023}))
\end{align}

Now, by summing the second moments of all the elements and using the unbiasedness of the MLMC estimator (Lemma~\ref{lemma:unbiasedness_of_MLMC_gradients}), we obtain the compression variance component of the MLMC estimator's variance as follows (Note that the total variance is given by $\sigma_{comp}^2 + \sigma^2)$:
\begin{align}
    \sigma_{comp}^2&=\E[\norm{\tilde{g}_{t,i}}^2]-(\E[\norm{\tilde{g}_{t,i}}])^2=\sum_{r=1}^d\E[\norm{\tilde{e}_{t,i}(r)}^2]-v_{t,i}^2\\
    &=\sum_{r=1}^d\left[2^{E(r)-1023}(1-2^{-52})(e(r)-(2^{E(r)-1023}))\right]-v_{t,i}^2
\end{align}

\section{Proof of Lemma~\ref{lemma:optimal_fixed_point_distribution}}\label{appendix:proof_of_lemma_optimal_fixed_point}

\textbf{Lemma~\ref{lemma:optimal_fixed_point_distribution}}  \emph{The optimal probability distribution that minimizes the variance of the Fixed-Point MLMC estimator is given by:
    \begin{equation}
        p^l = \frac{2^{-l}}{1-2^{-63}}
    \end{equation}}
    
\begin{proof}
The second moment of the Fixed-Point MLMC compressor is given by:
\begin{align}
    \E[\norm{\tilde{g}_{t,i}}^2]&=\E \left[{ \norm{g_{t,i}^0 + \frac{1}{p^l}(g_{t,i}^l - g_{t,i}^{l-1})}^2}\right]
\end{align}
where $\norm{\cdot}$ is the $l_2$-norm. Since fixed-point compressors are element-wise, similarly to the floating-point compressor, we consider a single entry of $\tilde{g}_{t,i}$, which we denote by $\tilde{e}_{t,i}^2$. Since $e^0_{t,i}=0$, we have:
\begin{align}
    \E \left[{ \left|\frac{1}{p^l}(e_{t,i}^l - e_{t,i}^{l-1})\right|^2}\right]
    &\stackrel{(1)}{=}\sum_{l=1}^{63}\left[p_l{ \left|\frac{1}{p^l}\left(\sum_{j=1}^{l}b_j2^{-j} - \sum_{i=1}^{l-1}b_i2^{-i}\right)\right|^2}\right]\\
    &=\sum_{l=1}^{63}\left[\frac{1}{p^l}{ \left(b_l2^{-l}\right)}^2\right]\\
    &\stackrel{(2)}{=}\sum_{l=1}^{63}\left[\frac{1}{p^l}b_l2^{-2l}\right]
\end{align}
where $(1)$ follows using the binary representation of the normalized element. and $(2)$ follows since every binary $b_l^2=b_l$ (note that $b^l$ can only be $0$ or $1$).
We wish to find the optimal probability distribution that minimizes the variance (note that the probability distribution should sum to $1$).
There are no additional assumptions on the binary number we wish to compress. Namely, for any $l$, $b_l$ can be $1$ or $0$, and we would like to minimize the objective regardless of the values of $b^l$. We bound $b^l$ and formalize this in the following optimization problem:
\begin{align}
    \hat{p}^l= \underset{\{p^l\}_{l=1}^{63}}{\arg\min}\max_{\lambda\ge0} \sum_{l=1}^{63}\left[\frac{1}{p^l}2^{-2l}\right]+\lambda\left(\sum_{l=1}^{63}p^l-1\right)
\end{align}
by setting the gradients with respect to $p^l$ and $\lambda$ to zero, we obtain the following:
\begin{equation}
    \sum_{l=1}^{63}\hat{p}^l=1
\end{equation}
\begin{equation}
    \hat{p}^l=\frac{1}{\sqrt{\lambda}}2^{-l}
\end{equation}
which imply that $\hat{p}^l$ must be proportional to $2^{-l}$, and with with proper normalization (by solving for $\lambda$ and extracting $p^l$), we have:
\begin{align}
    \hat{p}^{l} = \frac{2^{-l}}{1-2^{-63}}
\end{align}
which concludes the proof.
\end{proof}

Note that we can calculate the variance of the MLMC estimator using the optimal probabilities that we obtained. We start by calculating the second moment of some entry in the MLMC gradient estimate vector $\tg_{t,i}$.
\begin{align}
    \E[\norm{\tilde{e}_{t,i}}^2]
    =\E \left[{ \norm{\frac{1}{p^l}(e_{t,i}^l - e_{t,i}^{l-1})}^2}\right]
    =\sum_{l=1}^{63}\left[\frac{1}{p^l}b_l2^{-2l}\right]
    =(1-2^{-63})\sum_{l=1}^{63}\left[b_l2^{-l}\right]=(1-2^{-63})|e_{t,i}|\approx |e_{t,i}|
\end{align}

and by the unbiasedness of the MLMC estimate (Lemma \ref{lemma:unbiasedness_of_MLMC_gradients}), its compression variance is given by (note that the total variance is equal to $\sigma_{comp}^2 + \sigma^2$): 
\begin{equation}
    \sigma_{comp}^2=\E[\norm{\tilde{g}_{t,i}}^2]-(\E[\norm{\tilde{g}_{t,i}}])^2=(1-2^{-63})\|v_{t,i}\|_1-\norm{v_{t,i}}^2
\end{equation}

\section{Proof of Lemma ~\ref{lemma:optimal_topk_distribution}}\label{appendix:optimal_topk_proof}

\textbf{Lemma~\ref{lemma:optimal_topk_distribution}} \emph{ Given any multilevel compressor $C^l$, the optimal probability distribution that minimizes the variance of MLMC estimator
    in iteration $t\in[T]$ and for machine $i\in[M]$ is given by:
    \begin{equation}
        p_{t,i}^l = \frac{\Delta_{t,i}^l}{\sum_{l'=1}^L \Delta_{t,i}^{l'}}
    \end{equation}
    where $\Delta_{t,i}^l$ is the $\ell_2$ norm of the residual vector at step $t$, i.e., $\Delta_{t,i}^l=\norm{g_{t,i}^l - g_{t,i}^{l-1}}$.}

\begin{proof}
The second moment of the MLMC-based compressor is given by:
\begin{align}
    \E[\norm{\tilde{g}_{t,i}}^2]&=\E \left[{ \norm{g_{t,i}^0 + \frac{1}{p_{t,i}^l}(g_{t,i}^l - g_{t,i}^{l-1})}^2}\right]
\end{align}
Using our definition that $g_{t,i}^0=0$ and the definition of $\Delta_{t,i}^l$, we obtain:
\begin{align}
    \E[\norm{\tilde{g}_{t,i}}^2]&=\E \left[{\frac{1}{(p_{t,i}^l)^2} (\Delta_{t,i}^l)^2}\right]
\end{align}
amd by writing the expectation w.r.t $p_l$ explicitly, we have:
\begin{align}
    \E[\norm{\tilde{g}_{t,i}}^2]&=\sum_{l=1}^L\left[{\frac{1}{p_{t,i}^l} (\Delta_{t,i}^l)^2}\right]
\end{align}

We wish to find the optimal probability distribution that minimizes the variance (note that the probability distribution should sum to $1$). We formalize this into the following optimization problem:
\begin{align}
    \hat{p}^l_{t,i}= \underset{\{p_{t,i}^l\}_{l=1}^{L}}{\arg\min}\max_{\lambda\ge0} \sum_{l=1}^{L}\left[\frac{1}{p_{t,i}^l}(\Delta_{t,i}^l)^2\right]+\lambda\left(\sum_{l=1}^{L}p_{t,i}^l-1\right)
\end{align}
By setting the gradients with respect to $p_{t,i}^l$ and $\lambda$ to zero, we obtain the following:
\begin{equation}
    \sum_{l=1}^{L}\hat{p}_{t,i}^l=1
\end{equation}
\begin{equation}
    \hat{p}_{t,i}^l=\frac{1}{\sqrt{\lambda}}\Delta_{t,i}^l
\end{equation}
where $\frac{1}{\sqrt{\lambda}}$ is the normalization factor of the probability distribution. With proper normalization (by solving for $\lambda$ and extracting $\hat{p}_{t,i}^l$) the optimal probability distribution is given by:
\begin{equation}
        \hat{p}_{t,i}^l = \frac{\Delta_{t,i}^l}{\sum_{l'=1}^L\Delta_{t,i}^{l'}},
\end{equation}
which concludes the proof.
\end{proof}

We calculate the variance of the MLMC estimate using the optimal probabilities we obtained. We start by writing the second moment of the MLMC gradient estimate $\tg_{t,i}$:
\begin{align}
    \E[\norm{\tilde{g}_{t,i}}^2]&=\E \left[{ \norm{g_{t,i}^0 + \frac{1}{p_{t,i}^l}(g_{t,i}^l - g_{t,i}^{l-1})}^2}\right]\\
    &=\sum_{l=1}^L\left[{\frac{1}{p_{t,i}^l} (\Delta_{t,i}^l)^2}\right]=\left[\sum_{l=1}^L \Delta_{t,i}^l\right]\cdot\left[\sum_{l'=1}^L\Delta_{t,i}^{l'}\right]=\left[\sum_{l=1}^L \Delta_{t,i}^l\right]^2
\end{align}

Thus, since the MLMC estimator is unbiased (Lemma~\ref{lemma:unbiasedness_of_MLMC_gradients}), the compression variance of the MLMC compressor is given by (note that the total variance is equal to $\sigma_{t,comp}^2+\sigma^2$):
\begin{equation}\label{eq:variance_MLMC_topk}
    \sigma_{t,comp}^2=\E[\norm{\tilde{g}_{t,i}}^2]-(\E[\norm{\tilde{g}_{t,i}}])^2=\left[\sum_{l=1}^L \Delta_{t,i}^l\right]^2-\norm{v_{t,i}}^2
\end{equation}
This result is general and does not assume a specific multilevel compressor of the method.

Now, we apply those results to the case of $s$-Top-$k$-based MLMC compressor and derive the second moment of the MLMC estimate. Here, note that we use the adaptive distortion bound in Eq.~\eqref{eq:adaptive_distortion_bound_top_k} to write $\Delta_{t,i}^l$ in terms of $\alpha_{t,i}^l$. Recall that $\Delta_{t,i}^l$ is given by:
\begin{equation}
    \Delta_{t,i}^l=\norm{g_{t,i}^l - g_{t,i}^{l-1}},
\end{equation}
and since $g_{t,i}^l$ contains only a subset of the elements of the original uncompressed stochastic gradient $v_{t,i}$ (recall that $s$-top-$k$ retains the $k$ non-overlapping segments of length $s$ with the largest norms of the sorted stochastic gradient vector):
\begin{equation}
    \norm{g_{t,i}^l}^2=\alpha_{t,i}^l\norm{v_{t,i}}^2
\end{equation}
Similarly, we have:
\begin{equation}
     (\Delta_{t,i}^l)^2=\norm{g_{t,i}^l - g_{t,i}^{l-1}}^2= \norm{g_{t,i}^l}^2- \norm{g_{t,i}^{l-1}}^2
\end{equation}
This is because the norm of the difference is equivalent toy the norm of the $l$-th segemnt of length $s$ in the sorted stochastic gradient.
Therefore, $(\Delta_{t,i}^l)^2$ can be written as follows:
\begin{equation}
    (\Delta_{t,i}^l)^2=\norm{g_{t,i}^l - g_{t,i}^{l-1}}^2= \norm{g_{t,i}^l}^2- \norm{g_{t,i}^{l-1}}^2=(\alpha_{t,i}^l-\alpha_{t,i}^{l-1})\norm{v_{t,i}}^2
\end{equation}
Plugging into the optimal probabilities and the corresponding compression variance, we have:
\begin{equation}
    \hat{p}_{t,i}^l = \frac{\sqrt{\alpha_{t,i}^l - \alpha_{t,i}^{l-1}}}{\sum_{l'=1}^L \sqrt{\alpha_{t,i}^{l'} - \alpha_{t,i}^{l'-1}}} \quad ; \quad \sigma_{t,comp}^2 = \left[\left(\sum_{l=1}^L \sqrt{\alpha_{t,i}^l - \alpha_{t,i}^{l-1}}\right)^2 - 1\right]\norm{v_{t,i}}^2
\end{equation}

\section{Proof of Lemma~\ref{lemma:exp_decay_variance}}\label{appendix:proof_lemma_exp_decay_variance}
\textbf{Lemma~\ref{lemma:exp_decay_variance}} \emph{
Under Assumption \ref{assum:exp_decay} for sufficiently large $r\cdot d$, Alg.~\ref{alg:MLMC_CDP_SGD_ADAPTIVE} with the $s$-top-$k$ compressor, and the optimal probabilities in Lemma \ref{lemma:optimal_topk_distribution},
    guarantees $\mathcal{O}\left(\frac{1}{r_{t,i}s}\right)$ variance 
    of the MLMC estimator.}

\begin{proof}
The compression variance of the MLMC estimator in the case of $s$-top-$k$, $\sigma_{t,comp}^2$, is derived in Appendix \ref{appendix:optimal_topk_proof} and is given by (see Eq.~\eqref{eq:variance_MLMC_topk}):
 \begin{align}
        \sigma_{t,comp}^2 &= \left(\sum_{l=1}^L \Delta^l_{t,i}\right)^2-\norm{v_{t,i}}^2
\end{align}
Under Assumption \ref{assum:exp_decay},
the absolute value of the $j$-th element of the uncompressed stochastic gradient $v_{t,i}$ is given by:
\begin{equation}
        |v_{t,i}(j)|=|v_{t,i}(0)|e^{-\frac{r_{t,i}}{2}\cdot j}
\end{equation}
Thus, the norm of the vector can be written as:
\begin{align}
    \norm{v_{t,i}}^2 = \sum_{j=0}^{d-1}|v_{t,i}(0)|^2e^{-r_{t,i}\cdot j}=|v_{t,i}(0)|^2\frac{1-e^{-r_{t,i}\cdot d}}{1-e^{-r_{t,i}}}
\end{align}
where the second equality follows by the sum of a geometric series. Similarly, $(\Delta_{t,i}^l)^2$ is given by:
\begin{align}
    (\Delta_{t,i}^l)^2 = |v_{t,i}(0)|^2\sum_{j=s\cdot (l-1)}^{s\cdot l-1}e^{-r_{t,i}\cdot j}=|v_{t,i}(0)|^2\frac{e^{-r_{t,i}\cdot s(l-1)}(1-e^{-r_{t,i}\cdot s})}{1-e^{-r_{t,i}}}
\end{align}
these results give rise to two regimes depending on the value of $r_{t,i}$ compared to $d$:
\begin{itemize}
\item[(1)] \textbf{$r\cdot d<1$:} In this case, the exponential decay is slow, and the "tail" of the sorted vector entries is not negligible. Namely, if decay is very slow, the gradient vector entries would be nearly uniform. This is the worst-case scenario in which our MLMC compressor, rank-$k$, and top-$k$ all have similar performance since:
$\Delta_{t,i}^1\approx\Delta_{t,i}^2\approx...\approx\Delta_{t,i}^L$.
\item[(2)] \textbf{$r<1$ and $r\cdot d >1$:}
This is the more interesting case in which we expect our method to have an edge over the other. Accordingly, we derive an approximation for the variance under this regime.
by plugging the expression of the $\Delta_{t,i}^l$ and $\norm{v_{t,i}}^2$ into the expression for the variance (Eq.~\eqref{eq:variance_MLMC_topk}), we have:
\begin{align}
    \sigma_{t,comp}^2&= |v_{t,i}(0)|^2\left(\left(\sum_{l=1}^L \sqrt{\frac{e^{-r_{t,i}\cdot s(l-1)}(1-e^{-r_{t,i}\cdot s})}{1-e^{-r_{t,i}}}}\right)^2-\frac{1-e^{-r_{t,i}\cdot d}}{1-e^{-r_{t,i}}}\right)\\
    &=|v_{t,i}(0)|^2\left(\frac{1-e^{-r_{t,i}\cdot s}}{1-e^{-r_{t,i}}}\left(\sum_{l=1}^L \sqrt{e^{-r_{t,i}\cdot s(l-1)}}\right)^2-\frac{1-e^{-r_{t,i}\cdot d}}{1-e^{-r_{t,i}}}\right)\\
    &=|v_{t,i}(0)|^2\left(\frac{1-e^{-r_{t,i}\cdot s}}{1-e^{-r_{t,i}}}\left(\sum_{l=1}^L e^{-\frac{r_{t,i}}{2}\cdot s(l-1)}\right)^2-\frac{1-e^{-r_{t,i}\cdot d}}{1-e^{-r_{t,i}}}\right)\\
    &=|v_{t,i}(0)|^2\left(\frac{1-e^{-r_{t,i}\cdot s}}{1-e^{-r_{t,i}}}\left(\frac{1-e^{-\frac{r_{t,i}}{2}sL}}{1-e^{-\frac{r_{t,i}}{2}s}}\right)^2-\frac{1-e^{-r_{t,i}\cdot d}}{1-e^{-r_{t,i}}}\right)\\
    &=|v_{t,i}(0)|^2\left(\frac{1-e^{-r_{t,i}\cdot s}}{1-e^{-r_{t,i}}}\left(\frac{1-e^{-\frac{r_{t,i}}{2}d}}{1-e^{-\frac{r_{t,i}}{2}s}}\right)^2-\frac{1-e^{-r_{t,i}\cdot d}}{1-e^{-r_{t,i}}}\right)\\
    &\stackrel{(1)}{=}\norm{v_{t,i}}^2\left(\frac{1-e^{-r_{t,i}\cdot s}}{1-e^{-r_{t,i}\cdot d}}\left(\frac{1-e^{-\frac{r_{t,i}}{2}d}}{1-e^{-\frac{r_{t,i}}{2}s}}\right)^2-1\right)
\end{align}
where in (1) we used the expression for the norm of the gradient. To approximate the variance, we use the fact that $r\cdot d>1$ to approximate the exponents in the expression:
\begin{align}
    \sigma_{t,comp}^2&=\norm{v_{t,i}}^2\left(\frac{1-e^{-r_{t,i}\cdot s}}{1-e^{-r_{t,i}\cdot d}}\left(\frac{1-e^{-\frac{r_{t,i}}{2}d}}{1-e^{-\frac{r_{t,i}}{2}s}}\right)^2-1\right)\\
    &\approx \norm{v_{t,i}}^2\left(\frac{1-e^{-r_{t,i}\cdot s}}{\left(1-e^{-\frac{r_{t,i}}{2}s}\right)^2}-1\right)
\end{align}
Recall that $s$ is a hyperparameter that we can choose as we see fit, and specifically, we consider $s$ such that $s\cdot r_{t,i}\leq1$. This implies that the number of the elements we transmit is less or equal to $\frac{1}{r_{t,i}}$. Thus, we obtain the following approximation to the variance:
\begin{align}
    \sigma_{t,comp}^2&
    \approx \norm{v_{t,i}}^2\left(\frac{1-e^{-r_{t,i}\cdot s}}{\left(1-e^{-\frac{r_{t,i}}{2}s}\right)^2}-1\right)\\
    &\approx \norm{v_{t,i}}^2\left(\frac{r_{t,i}\cdot s}{\left(\frac{r_{t,i}}{2}s\right)^2}-1\right)\\
    &=\norm{v_{t,i}}^2\left(\frac{4}{{r_{t,i}}s}-1\right)\\
    &=\mathcal{O}\left(\frac{1}{r_{t,i}s}\right)
\end{align}
\end{itemize}
which concludes the proof.
\end{proof}

\newpage
\section{Convergence and Parallelization}
\subsection{Proof of Theorem \ref{theorem:MLMC_convergence}}\label{appendix:proof_convergence_homogeneous}
\begin{proof}
    We assume the homogeneous data setting in which $\D_i \equiv \D$ and thus $f_i(x) = f(x), \forall i$. We analyze the convex and nonconvex cases separately.
    \vspace{0.2cm}\newline
    \textbf{Homogeneous Convex case.}\newline
    Since our MLMC gradients, $\tg_{t,i}$, in Alg.~\ref{alg:MLMC_CDP_SGD} and Alg.~\ref{alg:MLMC_CDP_SGD_ADAPTIVE} are unbiased estimates of the true gradients, $\nabla f(x_t)$, for all $t\in[T]$ and $i\in[M]$ (see Lemma~\ref{lemma:unbiasedness_of_MLMC_gradients}), the following bound holds for $\eta\leq\frac{1}{2L}$ (see, e.g., Appendix A.1 in \citet{dorfman2024byzantine}):
    \begin{equation}\label{eq:convex_bound_dorfman}
        \E[f(\bar{x}_T) - f(x^*)] \leq \frac{1}{T}\sum_{t=1}^T \E[f(x_t) - f(x^*)] \leq \frac{D^2}{2\eta T} + \frac{\eta}{T}\sum_{t=1}^T \E V_t^2
    \end{equation}
    where $\bar{x}_T = \frac{1}{T}\sum_{t=1}^T$, $x^* = \arg\min_{x}f(x)$, $D=\norm{x_1 - x^*}$, and $V_t^2 = \E[\norm{\tg_t - \nabla f(x_t)}^2|x_t]$. Note that in this case $V_t^2$ is the variance of the \emph{MLMC gradients}.
    Let us now consider the variance term, $V_t^2$. We have:
    \begin{align}
        V_t^2 &= \E[\norm{\tg_t - \nabla f(x_t)}^2 | x_t]\\
        &\overset{(1)}{=} \frac{1}{M^2}\sum_{i=1}^M\E[\norm{\tg_{t,i}-\nabla f(x_t)}^2|x_t] \\
        &\overset{(2)}{\leq} \frac{2}{M^2}\sum_{i=1}^M \left(\E[\norm{\tg_{t,i}-v_{t,i}}^2 | x_t] + \E[\norm{v_{t,i} - \nabla f(x_t)}^2 | x_t]\right) \\
        &\overset{(3)}{\leq} \frac{2}{M^2} \sum_{i=1}^M \left(\E[\norm{\tg_{t,i}-v_{t,i}}^2 | x_t] + \sigma^2\right) \\
        &\overset{(4)}{\leq} \frac{2}{M^2} \sum_{i=1}^M \left(\homega^2\E[\norm{v_{t,i}}^2 | x_t] + \sigma^2\right) \\
        &\overset{(5)}{\leq} \frac{4}{M^2} \sum_{i=1}^M \left(\homega^2 \E[\norm{v_{t,i} - \nabla f(x_t)}^2 | x_t] + \homega^2 \norm{\nabla f(x_t)}^2\right) + \frac{2\sigma^2}{M} \\
        &\overset{(6)}{\leq} \frac{2(2\homega^2+1)\sigma^2}{M} + \frac{4}{M^2}\sum_{i=1}^M \homega^2 \norm{\nabla f(x_t)}^2 \label{eq:variance_in_terms_of_gradients} \\
        &\overset{(7)}{\leq} \frac{2(2\homega^2+1)\sigma^2}{M} + \frac{8\homega^2 L}{M}(f(x_t) - f(x^*))
    \end{align}
    where (1) follows since $\tg_t = \frac{1}{M}\sum_{i=1}^M \tg_{t,i}$ and the data samples are $i.i.d$, (2) and (5) since $\norm{a+b}^2 \leq 2\norm{a}^2 + 2\norm{b}^2, \forall a,b\in\R^d$, (3) and (6) by Assumption \ref{assum:bounded_variance}, (4) by Eq.~\eqref{eq:unbiased_compressors_properties} since our MLMC compressor is unbiased (see Lemma~\ref{lemma:unbiasedness_of_MLMC_gradients}), and (7) by Lemma \ref{lemma:smooth_self_bounding} since $f$ is $L$-smooth by Assumption \ref{assum:smoothness_of_loss}.
    Now, plugging this result back into Eq.~\eqref{eq:convex_bound_dorfman} yields:
    \begin{align}
        \E[f(\bar{x}_T) - f(x^*)] &\leq \frac{1}{T}\sum_{t=1}^T \E[f(x_t) - f(x^*)] \leq \frac{D^2}{2\eta T} + \frac{\eta}{T}\sum_{t=1}^T \E V_t^2 \\
        &\leq \frac{D^2}{2\eta T} + \eta\frac{2(2\homega^2+1)\sigma^2}{M} + \eta \frac{8\homega^2 L}{MT}\sum_{t=1}^T \E[f(x_t) - f(x^*)]
    \end{align}
    Choosing $\eta\leq \frac{M}{16\homega^2 L}$ and rearranging, we have:
    \begin{align}
        \frac{1}{T}\sum_{t=1}^T \E[f(x_t) - f(x^*)] \leq \frac{D^2}{\eta T} + \eta\frac{4(2\homega^2+1)\sigma^2}{M}.
    \end{align}
    Thus, for $\eta \leq \min\left\{\frac{1}{2L}, \frac{M}{16\homega^2 L}, \frac{D\sqrt{M}}{2\sigma\sqrt{(2\homega^2+1)T}}\right\}$, we have:
    \begin{equation}
        \E[f(\bar{x}_T) - f(x^*)] \leq \frac{1}{T}\sum_{t=1}^T \E[f(x_t) - f(x^*)] \leq \frac{2D^2 L}{T} + \frac{16\homega^2 D^2 L}{MT} + \frac{2\sigma\sqrt{2\homega^2+1}D}{\sqrt{MT}}.
    \end{equation}

    \textbf{Homogeneous Nonconvex case.}\newline
    The proof here follows very similarly to the one in the convex case. Here, similarly, we use the following bound, which holds for $\eta\leq\frac{1}{L}$ (see Appendix A.2 in \cite{dorfman2023DoCoFL}):
    \begin{equation}
        \frac{1}{T}\sum_{t=1}^T \E[\norm{\nabla f(x_t)}^2] \leq \frac{2\Delta_1}{T\eta} + \frac{\eta L}{T}\sum_{t=1}^T \E V_t^2.
    \end{equation}
    where $\Delta_1 = f(x_1) - f(x^*)$ and $V_t^2 = \E[\norm{\tg_t - \nabla f(x_t)}^2|x_t]$. Plugging in the expression for $V_t^2$ in Eq.~\eqref{eq:variance_in_terms_of_gradients} yields:
    \begin{align}
        \frac{1}{T}\sum_{t=1}^T \E[\norm{\nabla f(x_t)}^2] \leq \frac{2\Delta_1}{T\eta} + \eta \frac{2(2\homega^2 + 1)\sigma^2 L}{M} + \eta\frac{4\homega^2 L}{MT} \sum_{t=1}^T  \E\norm{\nabla f(x_t)}^2
    \end{align}
    Choosing $\eta\leq \frac{M}{8\homega^2 L}$ and rearranging, we have:
    \begin{equation}
        \frac{1}{T}\sum_{t=1}^T \E[\norm{\nabla f(x_t)}^2] \leq \frac{4\Delta_1}{T\eta} + \eta \frac{4(2\homega^2 + 1)\sigma^2 L}{M}.
    \end{equation}
    Thus, for $\eta \leq \min\left\{\frac{1}{L}, \frac{M}{8\homega^2 L}, \frac{\sqrt{M}}{\sigma\sqrt{(2\homega^2+1)LT}}\right\}$, we have:
    \begin{equation}
        \frac{1}{T}\sum_{t=1}^T \E[\norm{\nabla f(x_t)}^2] \leq \frac{4\Delta_1 L}{T} + \frac{32\homega^2\Delta_1 L}{MT} + \frac{4\sigma\sqrt{(2\homega^2+1)L}}{\sqrt{MT}}.
    \end{equation}
\end{proof}

\subsection{Self-bounding Property of Smooth Functions}
\begin{lemma}\label{lemma:smooth_self_bounding}
    A function $f:\R^d \rightarrow \R$ that is $L$-smooth (see Assumption \ref{assum:smoothness_of_loss}) satisfies the following, for any $x\in\R^d$:
    \begin{equation}
        \norm{\nabla f(x)}^2 \leq 2L(f(x)-f(x^*)),
    \end{equation}
    where $x^* \in \arg\min_x f(x)$.
\end{lemma}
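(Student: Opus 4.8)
The plan is to invoke the standard ``descent lemma'' form of $L$-smoothness and evaluate it at a single cleverly chosen point. First I would note that $f = \frac{1}{M}\sum_i f_i$ inherits $L$-smoothness from the $f_i$ (Assumption~\ref{assum:smoothness_of_loss}), since an average of $L$-smooth functions is $L$-smooth; hence the inequality $f(y) \leq f(x) + \la \nabla f(x), y-x\ra + \frac{L}{2}\norm{y-x}^2$ holds for all $x,y\in\R^d$.

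Next, I would fix an arbitrary $x\in\R^d$ and apply this inequality with the particular choice $y = x - \frac{1}{L}\nabla f(x)$, i.e.\ a single gradient step from $x$. Substituting gives
\begin{equation*}
    f(y) \leq f(x) - \frac{1}{L}\norm{\nabla f(x)}^2 + \frac{L}{2}\cdot\frac{1}{L^2}\norm{\nabla f(x)}^2 = f(x) - \frac{1}{2L}\norm{\nabla f(x)}^2.
\end{equation*}
Since $x^*\in\arg\min_x f(x)$, we have $f(x^*) \leq f(y)$, so combining the two displays yields $f(x^*) \leq f(x) - \frac{1}{2L}\norm{\nabla f(x)}^2$. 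Rearranging gives exactly $\norm{\nabla f(x)}^2 \leq 2L\big(f(x) - f(x^*)\big)$, as claimed, and since $x$ was arbitrary this holds for all $x\in\R^d$.

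I do not expect any genuine obstacle here: the argument is a two-line computation once the test point $y = x - \frac{1}{L}\nabla f(x)$ is chosen, and the only mild ``care'' needed is to observe that $f$ (not just each $f_i$) is $L$-smooth and that the minimum is attained so that $f(x^*)$ is well defined and lower-bounds $f(y)$. If one wanted to avoid assuming attainment, the same bound follows by taking an infimizing sequence, but given the statement already posits $x^* \in \arg\min_x f(x)$ this refinement is unnecessary.
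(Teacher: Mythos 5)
Your proposal is correct and follows exactly the same route as the paper: apply the $L$-smoothness inequality at the test point $y = x - \frac{1}{L}\nabla f(x)$, lower-bound $f(y)$ by $f(x^*)$, and rearrange. The additional remark that $f$ inherits $L$-smoothness from the $f_i$ is a harmless clarification the paper leaves implicit.
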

\begin{proof}
    Note that $f(x^*)\leq f(x')$, for any $x'\in\R^d$, by definition of $x^*$. Thus we have, for $x' = x - \frac{1}{L}\nabla f(x)$:
    \begin{align}
        f(x^*) &\leq f\left(x - \frac{1}{L}\nabla f(x)\right) \\
        &\overset{(1)}{\leq} f(x) - \frac{1}{L}\norm{\nabla f(x)}^2 + \frac{L}{2}\frac{1}{L^2}\norm{\nabla f(x)}^2,
    \end{align}
    where (1) follows by the smoothness of $f$. Rearranging yields:
    \begin{equation}
        \norm{\nabla f(x)}^2 \leq 2L (f(x) - f(x^*)).
    \end{equation}
\end{proof}

\subsection{Parallelization Guarantees}\label{appendix:parallelization_guarantees}
Our method, formalized in Alg.~\ref{alg:MLMC_CDP_SGD} (nonadaptive) and Alg.~\ref{alg:MLMC_CDP_SGD_ADAPTIVE} (adaptive) produces unbiased gradient estimates.
Therefore, the error bound of our method is very similar to that of Alg.~\ref{alg:parallel_SGD} (data-parallel SGD). concretely, Alg.~\ref{alg:MLMC_CDP_SGD}-\ref{alg:MLMC_CDP_SGD_ADAPTIVE} guarantee the following error bounds in the convex and nonconvex cases, and in the homogeneous setting, respectively (Theorem \ref{theorem:MLMC_convergence}):
\begin{align}
        &\E[f(\bar{x}_T)-f(x^*)] \in \mathcal{O} \left(\frac{D^2L}{T} + \frac{\homega^2 D^2 L}{MT} + \frac{(\homega+1)\sigma D}{\sqrt{MT}}\right)\label{eq:our_bound_convex} \\
        &\frac{1}{T}\sum_{t=1}^T\E\norm{\nabla f(x_t)}^2 \in \mathcal{O}\left(\frac{\Delta_1 L}{T} + \frac{\homega^2 \Delta_1 L}{MT} + \frac{(\homega+1) \sigma \sqrt{L}}{\sqrt{MT}}\right)\label{eq:our_bound_nonconvex}
\end{align}
Note that the middle terms are asymptotically negligible, and therefore these bounds are asymptotically similar to the bounds guaranteed by Alg.~\ref{alg:parallel_SGD}, i.e.:
\begin{equation}\label{eq:our_error_bound}
    \mathcal{O}\left(\frac{1}{T} + \frac{\sigma}{\sqrt{MT}}\right),
\end{equation}
albeit with the an increased variance $(\homega^2+1)\sigma$ (that depends on the baseline compression method we use, e.g., top-$k$ or fixed-point compression) instead of $\sigma$. Note that $\homega$ depends on the compressor and therefore on the compression coefficient $\alpha$. Please refer to Appendices \ref{appendix:floating_point_analysis}, \ref{appendix:optimal_topk_proof}, \ref{appendix:proof_lemma_exp_decay_variance}, for exact calculations for certain examples.
Note that the same \emph{asymptotic} bound holds for the heterogeneous case, with the heterogeneity bound $\xi$ added to $\sigma$.

In contrast, biased compression methods utilize an error correction mechanism to account for the bias. While these achieve impressive results, additional terms are added to the error bounds due to the bias of the gradients). Specifically, EF21-SGDM \cite{EF21_SGDM_fatkhullin_2023} guarantees the following bound (Corollary 3 in \citet{EF21_SGDM_fatkhullin_2023}, nonconvex case):
\begin{equation}\label{eq:EF21_SGDM_error_bound}
    \frac{1}{T}\sum_{t=1}^T \E\norm{\nabla f(x_t)}^2 \in \mathcal{O}\left(\frac{\Delta_1 L}{\alpha T} + \frac{\Delta_1 L\sigma^{1/2}}{\alpha^{1/2}T^{3/4}} + \frac{\Delta_1 L\sigma}{\sqrt{MT}}\right)
\end{equation}

Let us consider our bounds in Eq.~\eqref{eq:our_bound_convex}-\eqref{eq:our_bound_nonconvex}. Note that asymptotically, the third term is dominant. Therefore, $M$ can be as large as $o(T)$, or equivalently $o(\sqrt{N})$, where $N$ is the size of the whole dataset, without a degradation in performance. Moreover, these bounds can be written in terms of the size of the dataset, $N$, since $T=N/M$ ($N$ points split on $M$ machines). Thus, asymptotically, performance starts to degrade when (neglecting constants other than $T$ and $M$ (which depends on $T$)):
\begin{equation}
    \frac{1}{\sqrt{MT}} \geq \frac{1}{T} \Longleftrightarrow M\leq T \Longleftrightarrow M\leq \sqrt{N}.
\end{equation}
Now, similarly considering the parallelization limit of the bound of EF21-SGDM in Eq.~\eqref{eq:EF21_SGDM_error_bound}, and note that the second term is always more dominant than the first, asymptotic performance starts to degrade when:
\begin{equation}
    \frac{1}{\sqrt{MT}} \geq \frac{1}{T^{3/4}} \Longleftrightarrow M\leq \sqrt{T} \Longleftrightarrow M \leq N^{1/3},
\end{equation}
which implies a parallelization limit of up to $o(\sqrt{T})$, or equivalently $o(N^{1/3})$, without a degradation in performance.
This shows that in the regime of \emph{massive} parallelization, i.e., when $M$ is very large, our method enables better (more) parallelization without a degradation in performance. Interestingly, when $M$ is small, EF21-SGDM might have a slight edge, as the dominant terms are $\frac{\sigma}{\sqrt{MT}}$ for EF21-SGDM and $\frac{(\homega^2 + 1)\sigma}{\sqrt{MT}}$ for our method, since MLMC methods tranduce bias into variance, increasing it slightly. However, our experimental results show that our method maintains its edge over EF21-SGDM even when $M$ is very small.

\subsection{Extension to the Heterogeneous Case}\label{appendix:proof_convergence_heterogeneous}
Our method can be naturally extended to the heterogeneous case in which each machine samples points from a different distribution. That is, each machine $i\in[M]$ can sample $i.i.d$ data from some data distribution $\D_i$.
We assume that the heterogeneity is bounded, namely there exists $\xi\geq0$ such that, $\forall x\in\R^d$:
\begin{equation}\label{eq:bounded_heterogeneity}
    \frac{1}{M}\sum_{i=1}^M \norm{\nabla f_i(x) - \nabla f(x)}^2 \leq \xi^2
\end{equation}
Under this assumption, our method guarantees the bounds formalized in Theorem \ref{theorem:MLMC_convergence_heterogeneous} in the convex and nonconvex cases.

\begin{theorem}\label{theorem:MLMC_convergence_heterogeneous}
    Under Assumptions \ref{assum:smoothness_of_loss}-\ref{assum:bounded_variance}, and the bounded heterogeneity assumption in Eq.~\eqref{eq:bounded_heterogeneity}, Alg.~\ref{alg:MLMC_CDP_SGD} and Alg.~\ref{alg:MLMC_CDP_SGD_ADAPTIVE} guarantee the following error bounds in the heterogeneous convex and nonconvex cases, respectively:
    \begin{align*}
        &\E[f(\bar{x}_T)-f(x^*)] \in \mathcal{O} \left(\frac{D^2L}{T} + \frac{\homega^2 D^2 L}{MT} + \frac{\homega(\sigma+\xi)D}{\sqrt{MT}} + \frac{\sigma D}{\sqrt{MT}}\right) \\
        &\frac{1}{T}\sum_{t=1}^T\E\norm{\nabla f(x_t)}^2 \in \mathcal{O}\left(\frac{\Delta_1 L}{T} + \frac{\homega^2 \Delta_1 L}{MT} + \frac{\homega(\sigma+\xi)\sqrt{L}}{\sqrt{MT}} + \frac{\sigma\sqrt{L}}{\sqrt{MT}}\right)
    \end{align*}
\end{theorem}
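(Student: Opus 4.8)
The plan is to closely follow the proof of Theorem~\ref{theorem:MLMC_convergence} in App.~\ref{appendix:proof_convergence_homogeneous}, isolating the single place where heterogeneity enters. By Lemma~\ref{lemma:unbiasedness_of_MLMC_gradients}, each $\tg_{t,i}$ is conditionally unbiased for the \emph{local} gradient $\nabla f_i(x_t)$; since $\frac{1}{M}\sum_i \nabla f_i(x_t) = \nabla f(x_t)$, the aggregate $\tg_t = \frac{1}{M}\sum_i \tg_{t,i}$ is still conditionally unbiased for $\nabla f(x_t)$. Hence the convex recursion Eq.~\eqref{eq:convex_bound_dorfman} and its nonconvex analogue remain valid with $V_t^2 = \E[\norm{\tg_t - \nabla f(x_t)}^2 \mid x_t]$, and it suffices to re-derive a bound on $V_t^2$ that accounts for the data heterogeneity.

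For the variance, I would first write $\tg_t - \nabla f(x_t) = \frac{1}{M}\sum_i \bigl(\tg_{t,i} - \nabla f_i(x_t)\bigr)$; conditioned on $x_t$ these $M$ summands are independent and mean-zero (by Lemma~\ref{lemma:unbiasedness_of_MLMC_gradients} and the independence of the local samples), so $V_t^2 = \frac{1}{M^2}\sum_i \E[\norm{\tg_{t,i} - \nabla f_i(x_t)}^2 \mid x_t]$ --- the exact analogue of step~(1) in App.~\ref{appendix:proof_convergence_homogeneous}, but with the local gradients. Per machine I would split $\tg_{t,i} - \nabla f_i(x_t)$ into the compression error $\tg_{t,i} - v_{t,i}$ and the sampling noise $v_{t,i} - \nabla f_i(x_t)$, apply $\norm{a+b}^2 \le 2\norm{a}^2 + 2\norm{b}^2$, bound the former by $\homega^2 \E[\norm{v_{t,i}}^2 \mid x_t]$ via Eq.~\eqref{eq:unbiased_compressors_properties} and the latter by $\sigma^2$ via Assumption~\ref{assum:bounded_variance}, and then use a further split to get $\E[\norm{v_{t,i}}^2 \mid x_t] \le 2\sigma^2 + 2\norm{\nabla f_i(x_t)}^2$.

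The one genuinely new step is relating the \emph{local} gradient norms back to $\nabla f(x_t)$: using $\norm{\nabla f_i(x_t)}^2 \le 2\norm{\nabla f_i(x_t) - \nabla f(x_t)}^2 + 2\norm{\nabla f(x_t)}^2$, averaging over $i$, and invoking the bounded-heterogeneity assumption Eq.~\eqref{eq:bounded_heterogeneity} yields $\frac{1}{M}\sum_i \norm{\nabla f_i(x_t)}^2 \le 2\xi^2 + 2\norm{\nabla f(x_t)}^2$. Assembling the pieces gives a bound of the form $V_t^2 \le \frac{c_1(\homega^2+1)\sigma^2 + c_2\homega^2\xi^2}{M} + \frac{c_3\homega^2}{M}\norm{\nabla f(x_t)}^2$ for absolute constants $c_1,c_2,c_3$ --- the heterogeneity contributes exactly the extra $\homega^2\xi^2/M$ term relative to the homogeneous Eq.~\eqref{eq:variance_in_terms_of_gradients}. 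In the convex case I would then replace $\norm{\nabla f(x_t)}^2$ by $2L(f(x_t) - f(x^*))$ using Lemma~\ref{lemma:smooth_self_bounding}.

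From here the argument is identical to the homogeneous proof: plug the bound on $V_t^2$ into the recursion, choose $\eta = \mathcal{O}(M/(\homega^2 L))$ so the $f(x_t) - f(x^*)$ term (resp.\ the $\norm{\nabla f(x_t)}^2$ term) can be absorbed on the left, and then optimize over $\eta$ in $\frac{D^2}{\eta T} + \eta \cdot \frac{c_1(\homega^2+1)\sigma^2 + c_2\homega^2\xi^2}{M}$. The optimal $\eta$ contributes $\sqrt{c_1(\homega^2+1)\sigma^2 + c_2\homega^2\xi^2}\cdot D/\sqrt{MT}$, which by subadditivity of the square root is $\mathcal{O}\bigl((\sigma + \homega(\sigma+\xi))D/\sqrt{MT}\bigr) = \mathcal{O}\bigl(\tfrac{\homega(\sigma+\xi)D}{\sqrt{MT}} + \tfrac{\sigma D}{\sqrt{MT}}\bigr)$, alongside the $\frac{D^2L}{T}$ and $\frac{\homega^2 D^2 L}{MT}$ terms coming from the two active upper bounds on $\eta$; the nonconvex case is the same with $\Delta_1$ in place of $D^2$ and an extra $\sqrt{L}$ multiplying the variance term. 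The main obstacle --- really the only subtlety --- is the ordering in the variance decomposition: one must decompose $V_t^2$ against the \emph{local} gradients before converting gradient norms to $\norm{\nabla f(x_t)}^2$, since doing it in the other order would lose the mean-zero structure that makes the cross terms vanish; getting this right is what isolates the heterogeneity into the single clean $\homega\xi$ term.
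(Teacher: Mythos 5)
Your proposal is correct and follows essentially the same route as the paper's proof: decompose $V_t^2$ against the local gradients $\nabla f_i(x_t)$, split off the compression error via Eq.~\eqref{eq:unbiased_compressors_properties}, relate $\E[\norm{v_{t,i}}^2\mid x_t]$ to $\sigma^2$, $\xi^2$, and $\norm{\nabla f(x_t)}^2$ using the bounded-heterogeneity assumption and Lemma~\ref{lemma:smooth_self_bounding}, then absorb the gradient term with $\eta=\mathcal{O}(M/(\homega^2 L))$ and tune $\eta$. The only (immaterial) difference is that you apply two successive two-term Young inequalities where the paper uses a single three-term inequality $\norm{a+b+c}^2\leq 3(\norm{a}^2+\norm{b}^2+\norm{c}^2)$, which changes only absolute constants.
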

\begin{proof}
\textbf{Heterogeneous Convex Case.}\newline
    Similarly to the homogeneous case, since the MLMC gradients, $\tg_{t,i}$ used in Alg.~\ref{alg:MLMC_CDP_SGD}-\ref{alg:MLMC_CDP_SGD_ADAPTIVE} are unbiased estimators of the true machine-specific gradients, namely $\E[\tg_{t,i} | x_t] = \nabla f_i (x_t), \forall t\in[T], \forall i\in[M]$, we have the following bound in the convex case, for $\eta\leq\frac{1}{2L}$ (\cite{dorfman2024byzantine}):
    \begin{equation}\label{eq:convex_bound_dorfman_heterogeneous}
        \E[f(\bar{x}_T) - f(x^*)] \leq \frac{1}{T}\sum_{t=1}^T \E[f(x_t) - f(x^*)] \leq \frac{D^2}{2\eta T} + \frac{\eta}{T}\sum_{t=1}^T \E V_t^2
    \end{equation}
    where $\bar{x}_T = \frac{1}{T}\sum_{t=1}^T$, $x^* = \arg\min_{x}f(x)$, $D=\norm{x_1 - x^*}$, and $V_t^2 = \E[\norm{\tg_t - \nabla f(x_t)}^2|x_t]$. We now consider the term $V_t^2$:
    \begin{align}
        V_t^2 &= \E[\norm{\tg_t - \nabla f(x_t)}^2 | x_t]\\
        &= \E\left[\norm{\frac{1}{M}\sum_{i=1}^M (\tg_{t,i} - \nabla f_i(x_t))}^2 \bigg| x_t\right] \\
        &\overset{(1)}{=} \frac{1}{M^2}\sum_{i=1}^M\E[\norm{\tg_{t,i}-\nabla f_i(x_t)}^2|x_t] \\
        &\overset{(2)}{\leq} \frac{2}{M^2}\sum_{i=1}^M \left(\E[\norm{\tg_{t,i}-v_{t,i}}^2 | x_t] + \E[\norm{v_{t,i} - \nabla f_i(x_t)}^2 | x_t]\right) \\
        &\overset{(3)}{\leq} \frac{2}{M^2} \sum_{i=1}^M \left(\E[\norm{\tg_{t,i}-v_{t,i}}^2 | x_t] + \sigma^2\right) \\
        &\overset{(4)}{\leq} \frac{2}{M^2} \sum_{i=1}^M \left(\homega^2\E[\norm{v_{t,i}}^2 | x_t] + \sigma^2\right) \\
        &\overset{(5)}{\leq} \frac{6\homega^2}{M^2} \sum_{i=1}^M \left( \E[\norm{v_{t,i} - \nabla f_i(x_t)}^2 | x_t] + \norm{\nabla f_i(x_t) - \nabla f(x_t)}^2 + \norm{\nabla f(x_t)}^2\right) + \frac{2\sigma^2}{M} \\
        &\overset{(6)}{\leq} \frac{2(3\homega^2+1)\sigma^2}{M} + \frac{6\homega^2\xi^2}{M} + \frac{6\homega^2}{M^2}\sum_{i=1}^M \norm{\nabla f(x_t)}^2 \label{eq:variance_in_terms_of_gradients_heterogeneous} \\
        &\overset{(7)}{\leq} \frac{2(3\homega^2+1)\sigma^2}{M} + \frac{6\homega^2\xi^2}{M} + \frac{12\homega^2 L}{M}(f(x_t) - f(x^*))
    \end{align}
    where (1) follows since $\tg_t = \frac{1}{M}\sum_{i=1}^M \tg_{t,i}$ and the data samples are $i.i.d$, (2) since $\norm{a+b}^2 \leq 2\norm{a}^2 + 2\norm{b}^2, \forall a,b\in\R^d$, (3) by Assumption \ref{assum:bounded_variance}, (4) by Eq.~\eqref{eq:unbiased_compressors_properties} since our MLMC compressor is unbiased (see Lemma~\ref{lemma:unbiasedness_of_MLMC_gradients}), (5) since $\norm{a+b+c}^2 \leq 3(\norm{a}^2 + \norm{b}^2 + \norm{c}^2), \forall a,b,c\in\R^d$, (6) by Assumptions \ref{assum:bounded_variance} and Eq.~\eqref{eq:bounded_heterogeneity}, and (7) by Lemma \ref{lemma:smooth_self_bounding} since $f$ is $L$-smooth by Assumption \ref{assum:smoothness_of_loss}.
    plugging this result back into Eq.~\eqref{eq:convex_bound_dorfman_heterogeneous} yields:
    \begin{align}
        \E[f(\bar{x}_T) - f(x^*)] &\leq \frac{1}{T}\sum_{t=1}^T \E[f(x_t) - f(x^*)] \leq \frac{D^2}{2\eta T} + \frac{\eta}{T}\sum_{t=1}^T \E V_t^2 \\
        &\leq \frac{D^2}{2\eta T} + \eta\frac{2(3\homega^2+1)\sigma^2}{M} + \eta\frac{6\homega^2\xi^2}{M} + \eta \frac{12\homega^2 L}{MT}\sum_{t=1}^T \E[f(x_t) - f(x^*)]
    \end{align}
    Choosing $\eta\leq \frac{M}{24\homega^2 L}$ and rearranging, we have:
    \begin{align}
        \frac{1}{T}\sum_{t=1}^T \E[f(x_t) - f(x^*)] \leq \frac{D^2}{\eta T} + \eta\frac{4(3\homega^2+1)\sigma^2 + 12\homega^2\xi^2}{M}.
    \end{align}
    Thus, for $\eta \leq \min\left\{\frac{1}{2L}, \frac{M}{16\homega^2 L}, \frac{D\sqrt{M}}{\sqrt{4(3\homega^2+1)\sigma^2+12\homega^2\xi^2}T}\right\}$, we have:
    \begin{equation}
        \E[f(\bar{x}_T) - f(x^*)] \leq \frac{1}{T}\sum_{t=1}^T \E[f(x_t) - f(x^*)] \leq \frac{2D^2 L}{T} + \frac{16\homega^2 D^2 L}{MT} + \frac{\sqrt{4(3\homega^2+1)\sigma^2 + 12\homega^2\xi^2}D}{\sqrt{MT}}
    \end{equation}
    Therefore:
    \begin{equation}
        \E[f(\bar{x}_T) - f(x^*)] \in \mathcal{O}\left(\frac{D^2L}{T} + \frac{\homega^2D^2L}{MT} + \frac{\homega(\sigma+\xi)D}{\sqrt{MT}} + \frac{\sigma D}{\sqrt{MT}}\right).
    \end{equation}
    Note that this bound is consistent with its homogeneous counterpart when $\xi=0$.
    \vspace{0.2cm}\newline
    \textbf{Heterogeneous Nonconvex Case.}\newline
    The proof follows very similarly to the one for the convex case. Here, we have for $\eta\leq\frac{1}{L}$ (\cite{dorfman2024byzantine}):
    \begin{equation}
        \frac{1}{T}\sum_{t=1}^T \E[\norm{\nabla f(x_t)}^2] \leq \frac{2\Delta_1}{T\eta} + \frac{\eta L}{T}\sum_{t=1}^T \E V_t^2.
    \end{equation}
    where $\Delta_1 = f(x_1) - f(x^*)$ and $V_t^2 = \E[\norm{\tg_t - \nabla f(x_t)}^2|x_t]$. Plugging in the expression for $V_t^2$ in Eq.~\eqref{eq:variance_in_terms_of_gradients_heterogeneous} yields:
    \begin{align}
        \frac{1}{T}\sum_{t=1}^T \E[\norm{\nabla f(x_t)}^2] \leq \frac{2\Delta_1}{T\eta} + \eta \frac{2(3\homega^2 + 1)\sigma^2 L}{M} + \eta\frac{6\homega^2\xi^2 L}{M} + \eta\frac{6\homega^2 L}{MT} \sum_{t=1}^T  \E\norm{\nabla f(x_t)}^2
    \end{align}
    Choosing $\eta\leq \frac{M}{12\homega^2 L}$ and rearranging, we have:
    \begin{equation}
        \frac{1}{T}\sum_{t=1}^T \E[\norm{\nabla f(x_t)}^2] \leq \frac{4\Delta_1}{T\eta} + \eta \frac{(4(3\homega^2 + 1)\sigma^2 +12\homega^2\xi^2)L}{M}.
    \end{equation}
    Thus, for $\eta \leq \min\left\{\frac{1}{L}, \frac{M}{12\homega^2 L}, \frac{\sqrt{M}}{\sqrt{((3\homega^2+1)\sigma^2 +3\homega^2\xi^2)LT}}\right\}$, we have:
    \begin{equation}
        \frac{1}{T}\sum_{t=1}^T \E[\norm{\nabla f(x_t)}^2] \leq \frac{4\Delta_1 L}{T} + \frac{48\homega^2\Delta_1 L}{MT} + \frac{4\sqrt{((3\homega^2+1)\sigma^2 + 3\homega^2\xi^2)L}}{\sqrt{MT}}.
    \end{equation}
    Therefore:
    \begin{equation}
        \frac{1}{T}\sum_{t=1}^T \E[\norm{\nabla f(x_t)}^2] \in \mathcal{O}\left(\frac{\Delta_1 L}{T} + \frac{\homega^2 \Delta_1 L}{MT} + \frac{\homega(\sigma+\xi)\sqrt{L}}{\sqrt{MT}} + \frac{\sigma\sqrt{L}}{\sqrt{MT}}\right)
    \end{equation}
    Note that his bound is consistent with its homogeneous counterpart when $\xi=0$.
\end{proof}

\newpage
\section{Additional Experiments}
\subsection{Sparsification Compressors Evaluation on CIFAR-10 Image Classification Using ResNet18}\label{appendix:additional_experiments_ResNet_TopK}
We ran additional experiments comparing the performance of our MLMC-Top-$k$ compression method (Alg.~\ref{alg:MLMC_CDP_SGD_ADAPTIVE}), Top-$k$, Rand-$k$, EF21-SGDM, and uncompressed SGD, on CIFAR-10 Image Classification using ResNet18.

Figure \ref{fig:RESNET_CIFAR_10_TopK_Gbit} shows the test accuracy of the compared algorithms as a function of communication complexity (the number of transmitted bits), for $M=4$ machines and a batch size of 128 (top quartet) and $M=32$ machines and a batch size of 64 (bottom quartet), and for various levels of sparsification $k \in \{0.001, 0.005, 0.01, 0.05\}n$, where $n\approx 1.1\times 10^7$ is the number of model parameters. The results were averaged over 5 different seeds to mitigate randomness. We display these results as a function of the epoch (the number of iterations) in Figure \ref{fig:RESNET_CIFAR_10_TopK_Epoch}.

Figures \ref{fig:RESNET_CIFAR_10_TopK_Gbit}-\ref{fig:RESNET_CIFAR_10_TopK_Epoch} show that our method demonstrates an advantage over the others in terms of convergence speed and test accuracy.

\begin{figure}[ht]
\begin{center}
\centerline{\includegraphics[height=0.8\textheight, keepaspectratio]{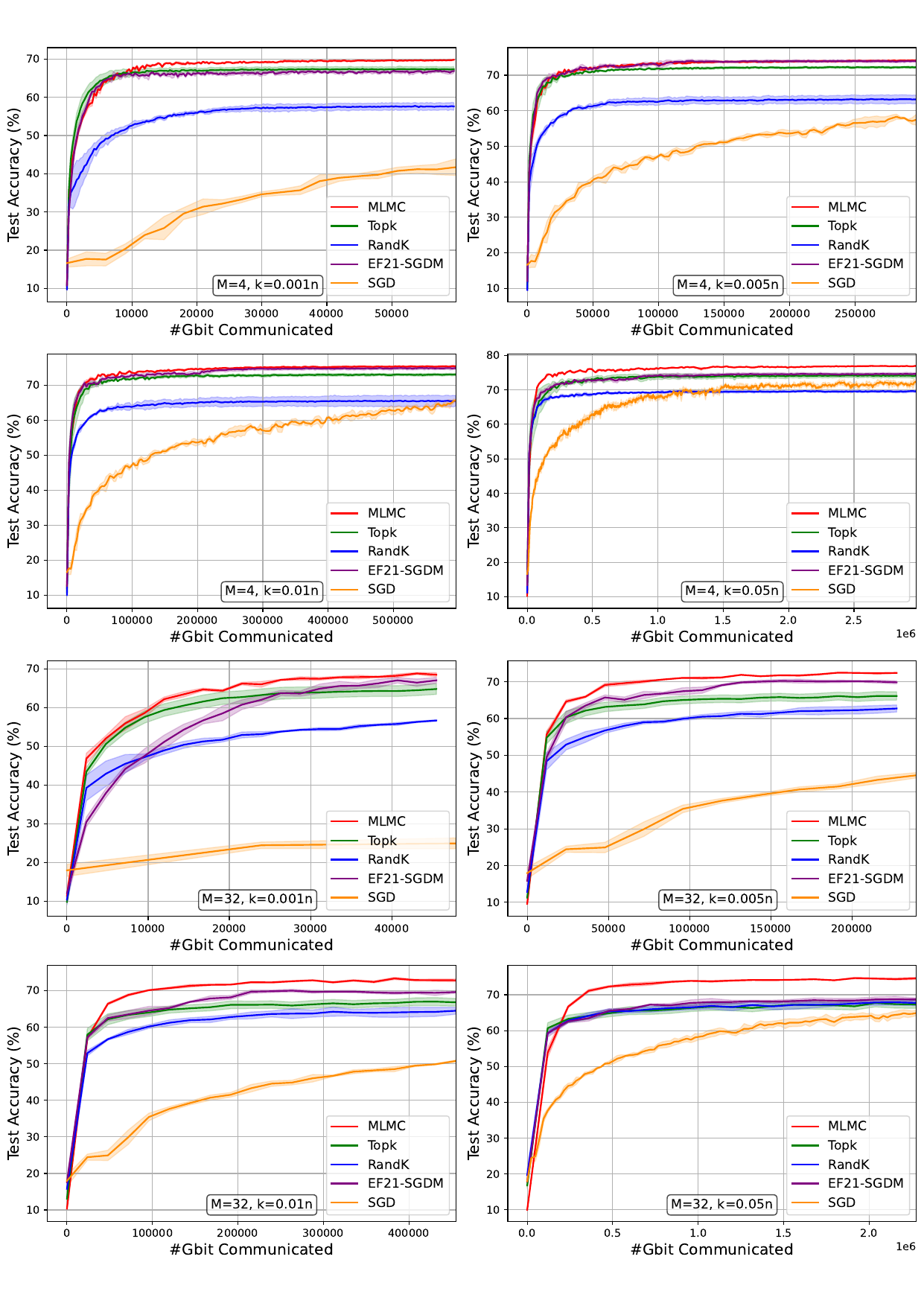}}
\caption{CIFAR-10 image classification using ResNet18, \textbf{communication efficiency} comparison of our MLMC-Top-$k$ Compressor (Alg.~\ref{alg:MLMC_CDP_SGD_ADAPTIVE}),Top-$k$, Rand-$k$, EF21-SGDM, and uncompressed SGD, for $M=4$ machines and a batch size of $128$ and for $M=32$ machines and a batch size of $64$, averaged over 5 different seeds.}
\label{fig:RESNET_CIFAR_10_TopK_Gbit}
\end{center}
\vskip -0.2in
\end{figure}

\begin{figure}[ht]
\begin{center}
\centerline{\includegraphics[height=0.8\textheight, keepaspectratio]{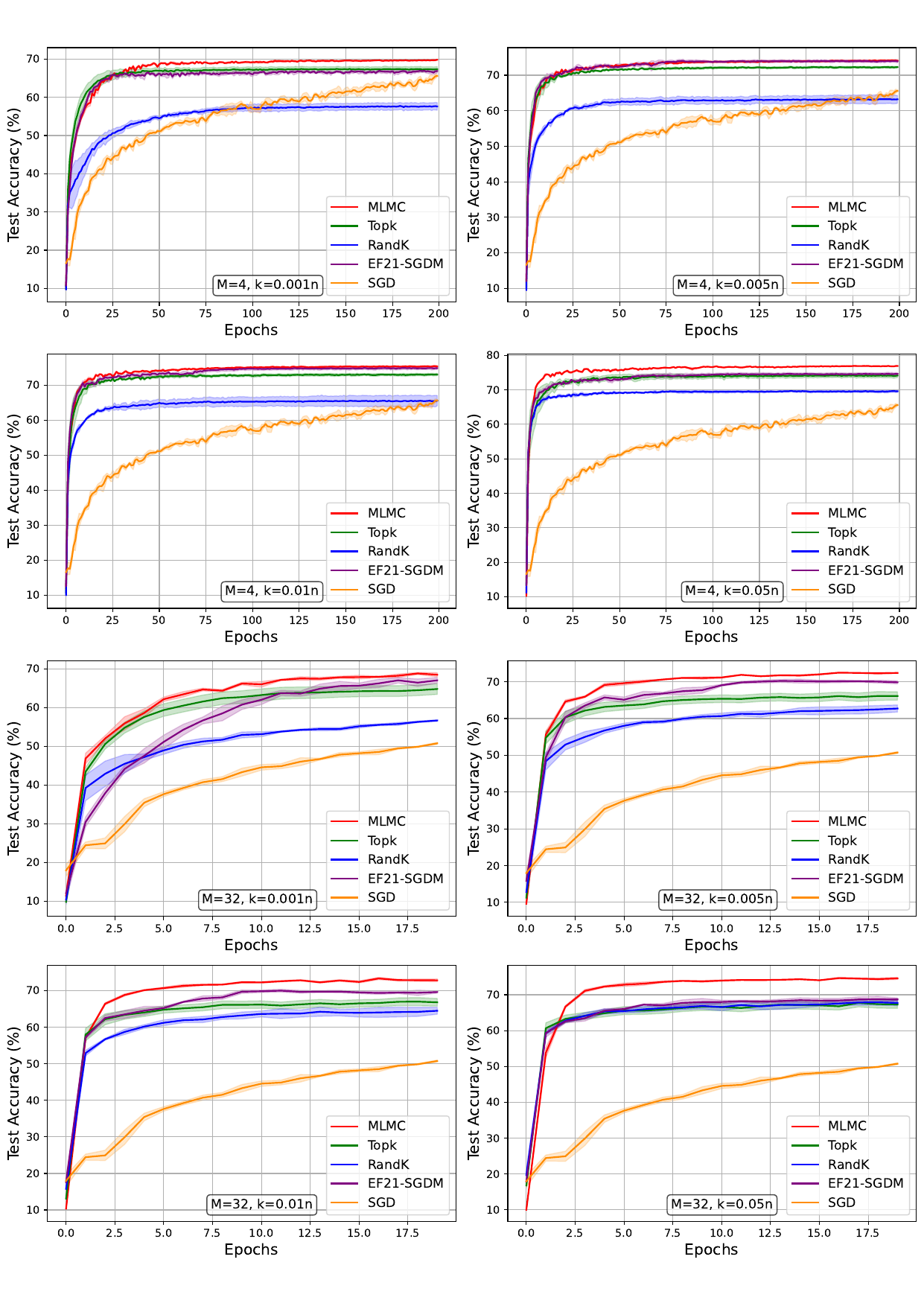}}
\caption{CIFAR-10 image classification using ResNet18, \textbf{iteration efficiency} comparison of our MLMC-Top-$k$ Compressor (Alg.~\ref{alg:MLMC_CDP_SGD_ADAPTIVE}),Top-$k$, Rand-$k$, EF21-SGDM, and uncompressed SGD, for $M=4$ machines and a batch size of $128$ and for $M=32$ machines and a batch size of $64$, averaged over 5 different seeds.}
\label{fig:RESNET_CIFAR_10_TopK_Epoch}
\end{center}
\vskip -0.2in
\end{figure}

\subsection{RTN Compression Evaluation on BERT Finetuning on GLUE SST-2}\label{appendix:additional_experiments_BERT_RTN}
We evaluate the performance of our MLMC-compression scheme on quantization-based compressors. Specifically we consider Round-to-Nearest (RTN) compression. Given a vector $v$, RTN compresses $v$ by defining a quantization grid and rounding each element of $v$ to the nearest integer on this grid. The spacing of this grid is controlled by the quantization step-size, $\delta^l$, where $l$ defines the quantization level (a larger $l$ corresponds to finer quantization). Specifically, given a vector $v$, its RTN-compression (of level $l$) is given by
\begin{equation}
    C^l_{RTN}(v) = \delta^l \cdot \text{clip}(\text{round}(v/\delta^l),-c,c),
\end{equation}
where the division, rounding, and clipping are done in an element-wise manner, and "round" rounds each element to its nearest integer on the grid defined by $\delta^l = \frac{2c}{2^l-1}$.

We evaluated our Adaptive MLMC-compression method (Alg.~\ref{alg:MLMC_CDP_SGD_ADAPTIVE}) with the RTN compressor as a baseline (which we term MLMC-RTN), and compared it to regular RTN compression (without MLMC) with $l\in\{2,4,8,16\}$, and to uncompressed SGD, for $M=4$ and $M=32$ machines. We used a batch size of 16 and averaged over 5 different seeds to mitigate randomness.
We present the results in Figure \ref{fig:BERT_RTN}. Note that our method enjoys a significant advantage in communication efficiency compared to the others in this case as well. Interestingly, the performance of all methods in terms of iteration efficiency is very similar, with SGD having a slight advantage over the others.

\begin{figure}[ht]
\begin{center}
\centerline{\includegraphics[width=\columnwidth, height=0.8\textheight, keepaspectratio]{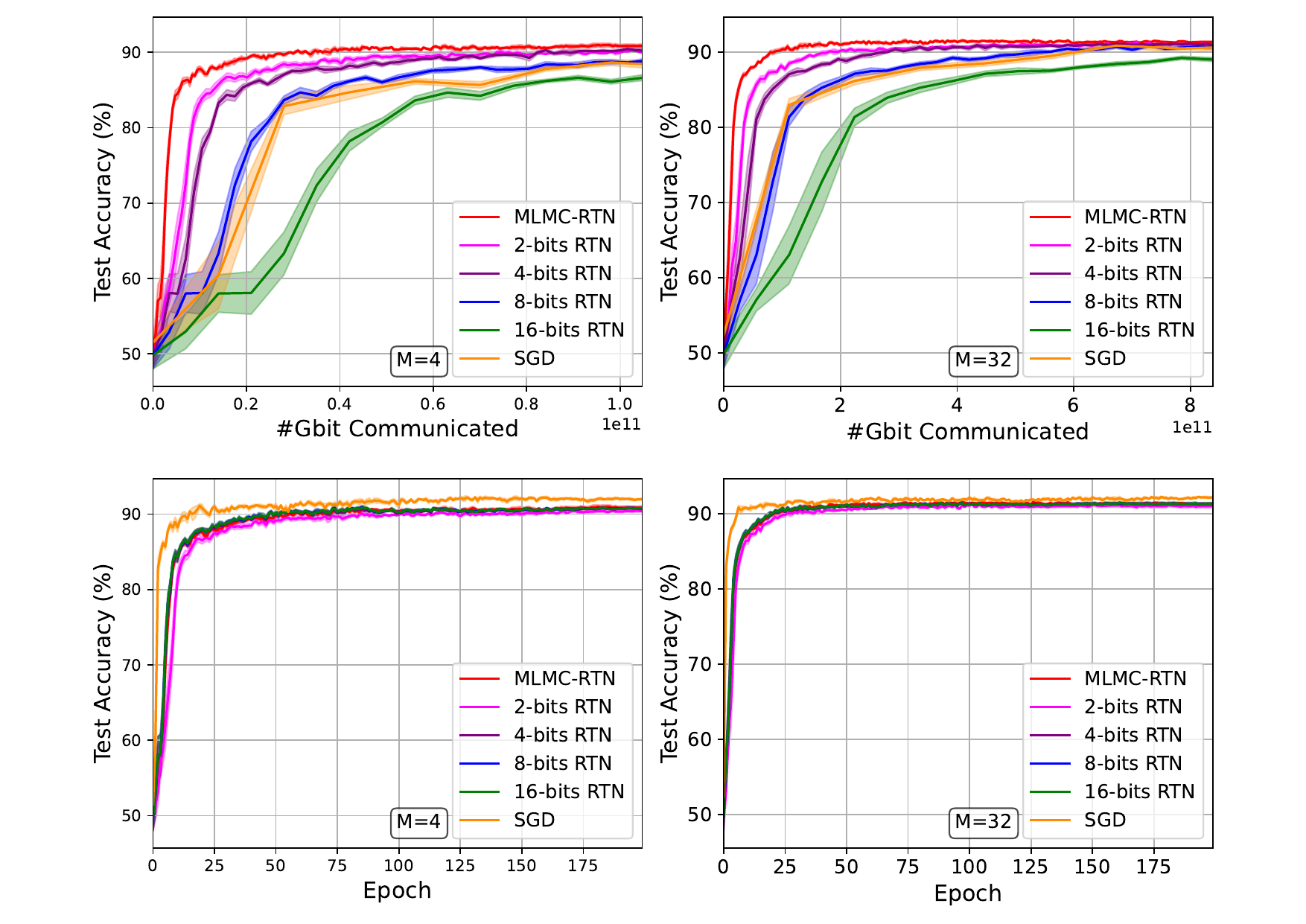}}
\caption{Finetuning BERT on GLUE SST2 \textbf{communication efficiency} (top row) and \textbf{iteration efficiency} (bottom row) comparison of the Adaptive MLMC-RTN (Alg.~\ref{alg:MLMC_CDP_SGD_ADAPTIVE}), RTN with $l\in\{2,4,8,16\}$, and uncompressed SGD, for $M=4$ and $M=32$ machines and a batch size of $16$ samples. The results are averaged over 5 different seeds.}
\label{fig:BERT_RTN}
\end{center}
\vskip -0.2in
\end{figure}
\onecolumn

\end{document}